\newcommand{\ditto}[1][.4pt]{\xrfill{#1}~\textquotedbl~\xrfill{#1}}
    \def\command@factory#1{%
        \expandafter\def\csname V#1\endcsname{\mathbf{#1}}
    }
\newcommand{\gen}{g}
\newcommand{\enc}{f}
\newcommand{\rep}{h}
\newcommand{\cri}{\delta}
\newcommand{\critic}{\psi}
\newcommand{\dist}{d}
\newcommand{\dimsrc}{n}
\newcommand{\dimobs}{m}
\newcommand{\dimlat}{n}
\newcommand{\ratioince}{K}
\newcommand{\minus}{\scalebox{0.9}[0.8]{-}}
\newcommand{\pjoint}{p_{\text{pos}}}
\newcommand{\pref}{p}
\newcommand{\pneg}{p}
\newcommand{\pconref}{p}
\newcommand{\pconposxy}{p}
\newcommand{\xpos}{\tilde{\Vx}}
\DeclareMathOperator{\eul}{\textit{e}}
\DeclareMathOperator{\vol}{vol}
\newcommand*\diff{\mathop{}\!\mathrm{d}}
\newcommand\norm[1]{\left\lVert#1\right\rVert}
\newtheorem{definition}{Definition}
\newtheorem{lemma}{Lemma}
\newtheorem{theorem}{Theorem}
\newtheorem{theoremabc}{Theorem}
\title{Towards a Unified Framework of Contrastive Learning for Disentangled Representations}
\author{%
  Stefan Matthes, Zhiwei Han, Hao Shen \\[1mm]
  fortiss GmbH, Munich, Germany \\[1mm]
  \texttt{\{matthes,han,shen\}@fortiss.org}
}
\begin{document}

\maketitle

\begin{abstract}

Contrastive learning has recently emerged as a promising approach for learning data representations that discover and disentangle the explanatory factors of the data.
Previous analyses of such approaches have largely focused on individual contrastive losses, such as noise-contrastive estimation (NCE) and InfoNCE, and rely on specific assumptions about the data generating process.
This paper extends the theoretical guarantees for disentanglement to a broader family of contrastive methods, while also relaxing the assumptions about the data distribution.
Specifically, we prove identifiability of the true latents for four contrastive losses studied in this paper, without imposing common independence assumptions.
The theoretical findings are validated on several benchmark datasets.
Finally, practical limitations of these methods are also investigated.

\end{abstract}

\section{Introduction}\label{sec:intro}

Learning to disentangle the explanatory factors of the observed data is valuable for a variety of machine learning (ML) applications, as it allows for a more compact, interpretable, and manipulable data representation.
Consequently, this can improve sampling efficiency \citep{van2019disentangled} and predictive performance \citep{locatello2019disentangling, gao2019learning, geirhos2020shortcut} for downstream tasks, and promote fairness \citep{locatello2019fairness, creager2019flexibly}.
Among the many efforts to develop a theoretically grounded approach to learning such representations, contrastive learning (CL) has emerged as a particularly promising technique.

Intuitively, contrastive methods learn a function that maps observations into a representation space, such that related examples (e.g., augmentations of the same sample) are mapped close to each other and negative pairs are mapped far apart \citep{simclr, wang2020understanding}.
In recent years, this intuition has been refined and extended by different theories.
One of these theories is that contrastive methods (approximately) invert the data generating process \citep{cl_ica, gcl} and thus recover the generative factors.
We build upon this line of work and extend their theoretical findings to a wider range of contrastive objectives.

In this study, we aim to develop a unified framework of statistical priors on the data generating process to improve our understanding of CL for disentangled representations.
Two important issues we will discuss are how to deal with nonuniform marginal distributions of the latent factors and situations when these factors are conditionally dependent to some degree.
We demonstrate how this could be accomplished with a simple modification of common contrastive objectives.
We also show that without these modifications, contrastive objectives make implicit assumptions about the data generating process and investigate when these assumptions are justified.
We empirically verify our theoretical claims and show practical limitations of the proposed framework.

In short, the contributions of our study can be summarized as follows:
\begin{itemize}
    \item We extend and unify theoretical guarantees of disentanglement for a family of contrastive losses under relaxed assumptions about the data generating process.
    \item The theoretical findings are empirically validated on several benchmark datasets and we quantitatively compare the disentanglement performance of four contrastive losses.
    \item We analyze the impact of partially violated assumptions and investigate practical limitations of the proposed framework.
\end{itemize}

\section{Related Work}\label{sec:related_work}

\textbf{Disentangled Representation Learning} The concept of learning representations whose components correspond to the explanatory factors of the data can be traced back to the principles of blind source separation \citep{cardoso1998blind} and discovering factorial codes \citep{barlow1989finding}.
In contrast to earlier work, however, the research focus has gradually shifted to applications where the relationship between data and underlying factors is nonlinear.

The difficulty of this task is that for independent and identically distributed (i.i.d.) data that depend nonlinearly on the latent factors, it is fundamentally impossible to identify these factors without labels or assumptions about the data generating process \citep{hyvarinen1999nonlinear, locatello2019challenging, ivae}.

A few studies showed how the underlying generative factors can be recovered if they are mutually independent (with at most one of them being Gaussian) and their relation to the observed data exhibits certain regularities, such as a linear mixture followed by an element-wise nonlinearity \citep{taleb1999source}, and local isometries \citep{horan2021unsupervised}.
\citet{ima} also showed that conformal maps rule out some spurious solutions.

Most approaches, however, depart from the i.i.d. assumption and utilize co-observed dependent variables, such as a time index \citep{tcl} or prior observations  \citep{pcl} in a time-series, observations obtained from interventions \citep{locatello2020weakly}, data augmentation \citep{von2021self} or different views \citep{gresele2020incomplete}.
These approaches mainly differ in how they model the statistical and causal dependencies between the latent factors and their relation to the auxiliary variable.

\citet{gcl} modeled the distribution of the latent factors with the exponential family, where the key assumptions are that, for a given auxiliary variable, the latent factors are conditionally independent and that the effect of the auxiliary variable on their distributions is sufficiently diverse.
Their theory was later extended to the Variational Autoencoder (VAE \citep{vae}) framework \citep{ivae} and to energy-based models \citep{ice_beem} with further relaxed independence assumptions.
Other conditional distributions that have been considered are the Laplace distribution \citep{slow_vae} and the von Mises–Fisher distribution \citep{cl_ica}.
In this work, we extend these results to distance-based conditional distributions and arbitrary marginals of the latents.

Another research direction considers pairs of observations whose underlying latents differ only in some of the factors and share the others, for example, through interventions or actions.
In \citep{shu2019weakly, lippe2022citris} the shared factors or intervention targets are assumed to be known.
\citet{locatello2020weakly} showed how the ground-truth latents can be recovered when the pairs of observations share at least one latent factor, which was extended to multi-dimensional factors in \citep{fumero2021learning}, but their approaches additionally require that the latent factors are mutually independent.
In contrast, \citep{lachapelle2022disentanglement, ahuja2022weakly, brehmer2022weakly} proved identifiability for more general causal dependencies between the latent factors by exploiting sparse transitions.

\textbf{Contrastive Learning} In recent years, contrastive methods have shown a remarkable ability to learn useful representations for downstream tasks \citep{mnih2013learning, info_nce, simclr, he2020momentum, scl}.
A number of works attribute this success primarily to their tendency to maximize mutual information (MI) between latent representations \citep{info_nce, dim, bachman2019learning, cmc}.
However, it has been observed that contrastive objectives with tighter MI bounds do not necessarily enhance downstream performance \citep{tschannen2019mutual, tian2020makes}.

More recently, by exploiting the alignment and uniformity properties \citep{wang2020understanding} of InfoNCE \cite{info_nce}, \citet{cl_ica} showed that InfoNCE approximately inverts the data generating process, leading to the identification of the true latent factors.
We extend this theory to a broader family of contrastive methods \citep{info_nce, nce, scl, nwj}.

\section{Contrastive Learning for Disentangled Representations}
\label{sec:theory}
In this section, we present our theoretical framework of contrastive methods for learning disentangled representations (Figure~\ref{fig:framework}).
We begin with the formalization of our assumptions on the underlying data generating process. 

\begin{figure*}
    \centering
    \includegraphics[width=0.7\linewidth]{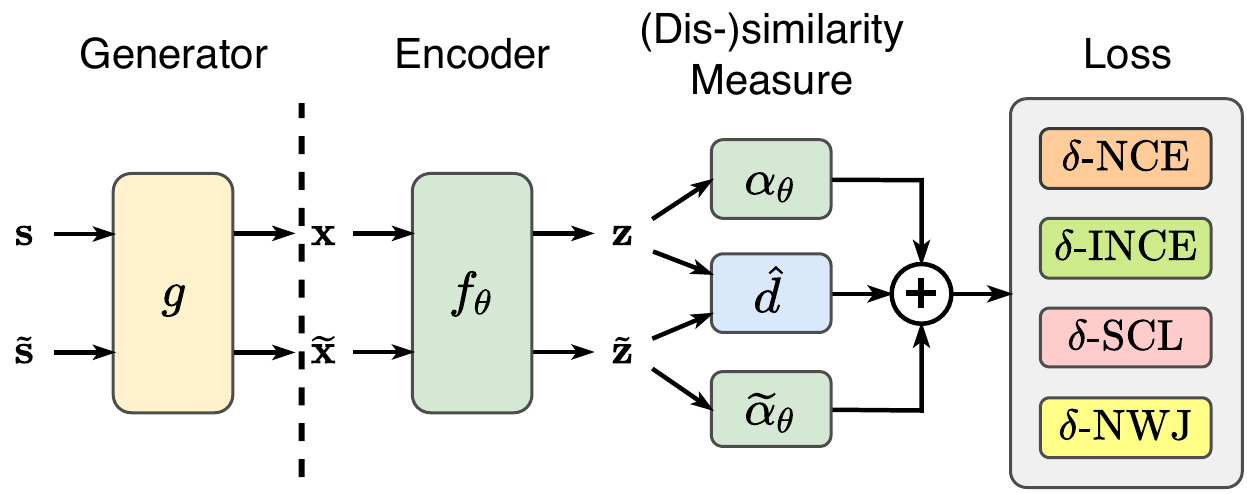}
    \caption{Overview of our CL framework for disentanglement. The unknown generator $\gen$ maps latent pairs $(\Vs, \tilde{\Vs}) \sim p(\Vs, \tilde{\Vs})$ to observation pairs $(\Vx, \tilde{\Vx})$, which are encoded by $\enc_{\theta}$. After computing the dissimilarity measure according to Eq.~\eqref{eq:critic}, the representation is optimized with one of the contrastive losses.}
    \label{fig:framework}
\end{figure*}

\subsection{Data Generating Process}
In this work, we adopt the common assumption that the observations are generated by a smooth and invertible generator $\gen : \mathcal{S} \to \mathcal{X}$, where $\mathcal{S} \subseteq \mathbb{R}^{\dimsrc}$ denotes the space of latent factors and $\mathcal{X} \subseteq \mathbb{R}^{\dimobs}$ the space of observations. 
Here, we assume  $\dimsrc \leq \dimobs$.
The goal is to find a function $\enc \colon \mathcal{X} \to \mathcal{Z} \subseteq \mathbb{R}^{\dimlat}$ that inverts $\gen$ up to element-wise transformations or other simple mappings such as linear functions.

We do not impose any constraints on the ground-truth marginal distribution of the latents, $p(\Vs)$, except that the support must be connected.
Let $\Vs \sim p(\Vs)$ and $(\Vs, \tilde{\Vs})$ be a latent positive pair.
We model the dependency between $\Vs$ and $\tilde{\Vs}$ with the following conditional distribution
\begin{equation}\label{eq:data}
    p(\tilde{\Vs} | \Vs) = \frac{Q(\tilde{\Vs})}{Z(\Vs)} \eul^{-\dist(\Vs, \tilde{\Vs})} 
    \quad\text{ with }\quad
    Z(\Vs) = \int_{\mathcal{S}} Q(\tilde{\Vs}) \eul^{-\dist(\Vs, \tilde{\Vs})} \diff \tilde{\Vs},
\end{equation}
where $Q$ and $Z$ are scalar-valued functions and $\dist$ is a distance function, which we will specify below.
It is worth noticing that here the latent factors $s_i$ are not assumed to be statistically independent or conditionally independent given auxiliary observations ($Q$ cannot necessarily be factorized).

Two related observations (e.g., consecutive frames in a sequence, multiple views or augmentations of the same data point) are then generated by $\Vx = \gen(\Vs)$ and $\tilde{\Vx} = \gen(\tilde{\Vs})$.
For the sake of simplicity, we assume that the related instances are of the same modality, though it is straightforward to extend this framework to different modalities, such as 
image and audio coming from different generators.

\subsection{Contrastive Learning Approaches}
Contrastive methods generally have in common that they employ certain mechanisms to measure similarity between data points.
The most frequent choice is the cosine similarity \citep{simclr}, while $\ell_p$~distances \citep{chen2021intriguing, cl_ica} and even learnable functions  \citep{pcl, gcl} have also been used.
Inspired by the work in \citep{pcl, gcl}, which relies on specific models of learnable similarity measures, we propose the following generic dissimilarity measure
\begin{equation}\label{eq:critic}
    \cri(\Vz, \tilde{\Vz}) = \hat{\dist}(\Vz, \tilde{\Vz}) + \alpha(\Vz) + \tilde{\alpha}(\tilde{\Vz}),
\end{equation}
where $\alpha$ and $\tilde{\alpha}$ are learnable scalar-valued functions, e.g., neural networks, and $\hat{\dist}$ is a fixed expression that describes the interaction between related examples.
In the ideal case, $\hat{\dist}$ matches $\dist$ in Eq.~\eqref{eq:data}.

Let us now recall some common contrastive methods.
The first approach we investigate is noise-contrastive estimation (NCE) \citep{nce}, which has been used for disentanglement in \citep{pcl, gcl}.
Using our dissimilarity measure, the loss can be formulated as follows
\begin{equation}\label{eq:nce}
    \mathcal{L}_{\cri\text{-NCE}}(\enc, \cri) = 
    \!\!\!\mathop{\mathbb{E}}_{(\Vx,\xpos) \sim \pjoint} \!\!\!
    - \log\left[\text{sig}(-\cri(\enc(\Vx), \enc(\xpos)))\right] \\
    - \!\!\! \mathop{\mathbb{E}}_{\Vx, \Vx^{\minus} \sim \pref} \!\!\!
    \log\left[1 - \text{sig}(-\cri(\enc(\Vx), \enc(\Vx^{\minus})))\right],
\end{equation}
where $\text{sig}$ is the sigmoid function, $\pjoint$ is the distribution of positive pairs, and $\pref$ is the distribution of all observations.
\footnote{For simplicity, we assume here that both marginal distributions are the same. We consider the more general case of different marginals in the supplementary material.}

Another popular choice is InfoNCE, which has shown remarkable success in self-supervised learning \citep{info_nce, simclr, dim, he2020momentum}. The loss function has several related forms and has been referred to by different names \citep{jozefowicz2016exploring, info_nce, simclr, review_cl}.
In the following, we use
\begin{equation}\label{eq:ince}
    \mathcal{L}_{\cri\text{-INCE}}(\enc, \cri; \ratioince) = 
    \mathop{\mathbb{E}}_{\substack{(\Vx,\xpos) \sim \pjoint\\\{\Vx_i^{\minus}\}_{i=1}^{\ratioince} \stackrel{\text{i.i.d.}}{\sim} \pneg}}
    \!\!\!- \log \frac{\eul^{-\cri(\enc(\Vx), \enc(\xpos))}}
    {\eul^{-\cri(\enc(\Vx), \enc(\xpos))}
    + \sum\limits_{i=1}^{\ratioince} \eul^{-\cri(\enc(\Vx), \enc(\Vx_i^{\minus}))}},
\end{equation}
where $\ratioince \geq 1$ is the number of negative examples.
Since $\mathcal{L}_{\cri\text{-INCE}}$ is invariant to changes in $\alpha$, we simply set $\alpha(\Vz)=0$ in this case.

\citet{scl} introduced the concept of spectral contrastive learning (SCL), which has classification guarantees for downstream tasks in the context of 
self-supervised learning.
To fit this method into our theoretical framework, we use a slightly modified version in our analysis, where we substitute $\enc(\Vx)^{\mathsf{T}}\enc(\tilde{\Vx})$ with $\eul^{-\cri(\enc(\Vx), \enc(\tilde{\Vx}))}$, i.e.,
\begin{equation}\label{eq:scl}
    \mathcal{L}_{\cri\text{-SCL}}(\enc, \cri) = 
    \mathop{\mathbb{E}}_{(\Vx,\xpos) \sim \pjoint}
    \!\!\!-2 \eul^{-\cri(\enc(\Vx), \enc(\xpos))} \\
    + \mathop{\mathbb{E}}_{\Vx, \Vx^{\minus} \sim \pref}
    \eul^{-2 \cri(\enc(\Vx), \enc(\Vx^{\minus}))}.
\end{equation}

Finally, we examine the Nguyen-Wainright-Jordan (NWJ) objective \citep{nwj} using the same dissimilarity measure as before
\begin{equation}\label{eq:nwj}
    \mathcal{L}_{\cri\text{-NWJ}}(\enc, \cri) = 
    \mathop{\mathbb{E}}_{(\Vx,\xpos) \sim \pjoint}
    \cri(\enc(\Vx), \enc(\xpos))
    + \mathop{\mathbb{E}}_{\Vx, \Vx^{\minus} \sim \pref}
    \eul^{-\cri(\enc(\Vx), \enc(\Vx^{\minus}))}.
\end{equation}
Note, that we here use it as a loss instead of a lower bound on MI.
To the best of our knowledge, neither the SCL nor the NWJ objective have been employed to learn disentangled representations or for ICA.

\subsection{Identifiability}

In this section, we derive precise conditions on the data generating process under which the relationship between the learned representation and true latent factors can be described by a simple function.
In line with \citep{ice_beem}, we speak of weak identifiability when this function is an affine mapping, and of strong identifiability when it is a generalized permutation matrix.\footnote{This property is sometimes called explicitness \citep{ridgeway2018learning} or informativeness \citep{eastwood2018framework}, when it is left open what is meant by a simple function.}

As in previous research \citep{slow_vae, cl_ica}, we assume that the interaction between $\Vs$ and $\tilde{\Vs}$ is (approximately) known, i.e., $\hat{\dist}$ in the dissimilarity measure as defined in Eq.~\eqref{eq:critic} matches $\dist$ in the conditional distribution defined in Eq.~\eqref{eq:data}.
Although this is a strong restriction on the latent factors,
satisfactory performance is still observed in practice, even when the assumption is violated.
Note also that if they only match up to other simple transformations (e.g., invertible element-wise) of the true or learned latents, the recovered relation can be extended by just those transformations.

First, we study the case of weak identifiability.
Eq.~\eqref{eq:data} essentially states that the joint probability density function of positive pairs decreases exponentially with the distance between the underlying latent factors, distorted only by $Q(\tilde{\Vs})$ and the marginal distribution $p(\Vs)$.
Neither $Q(\tilde{\Vs})$ nor $p(\Vs)$ depends on both instances of a pair.
This allows us to learn a distance-preserving representation while accumulating factors that depend exclusively on the first or second element in $\alpha$ and $\tilde{\alpha}$, respectively.
We formalize this in the following theorem.

\begin{theorem}[Weak identifiability]\label{theorem:affine}
    Let $\mathcal{S} \subseteq \mathbb{R}^{n}$ be open and connected, $\mathcal{X} \subseteq
    \mathbb{R}^{m}$, and 
    $\gen \colon \mathcal{S} \to \mathcal{X}$ invertible and differentiable.
    Let us further assume that the observed data satisfy the generative model given in 
    Eq.~\eqref{eq:data}.
    If $\dist=\hat{\dist}$ has one of the following properties:
    \begin{enumerate}[label=(\roman*)]
        \item there exists a function $\xi \colon \mathbb{R}^{+} \to \mathbb{R}^{+}$,  such that $\xi \circ \dist$ is a norm-induced metric\footnote{A norm-induced metric is a metric which is translation invariant, $d(\Vx,\Vy) = d(\Vx+\Vb,\Vy+\Vb)$, and absolutely homogeneous, $d(\sigma \Vx, \sigma \Vy) = |\sigma|d(\Vx,\Vy)$},
        \item $\dist(\Vs, \tilde{\Vs})=\sum_i \dist_i(|s_i - \tilde{s}_i|)$, where each $\dist_i$ is continuous and strictly increasing,
    \end{enumerate}
    then the optimal estimator of any of the contrastive losses presented above identifies the true latent factors up to affine transformations, i.e., $\rep = \enc \circ \gen$ is an affine mapping.

\end{theorem}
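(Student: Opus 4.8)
The plan is to show that all four objectives share the same optimal critic, reduce the optimality condition to a single functional equation for $\rep = \enc \circ \gen$, and then exploit the algebraic structure of $\dist$ to force $\rep$ to be affine.

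\emph{Step 1 (optimal critic).} For a fixed, sufficiently expressive encoder I would identify the critic that minimizes each loss by pointwise optimization of the integrand over the values $\cri$ can realize. For $\mathcal{L}_{\cri\text{-NCE}}$ this is the Bayes-optimal logistic classifier separating $\pjoint$ from $\pref \otimes \pref$; for $\mathcal{L}_{\cri\text{-SCL}}$ and $\mathcal{L}_{\cri\text{-NWJ}}$ it follows by setting the derivative of the integrand in $\eul^{-\cri}$ (resp. $\cri$) to zero; for $\mathcal{L}_{\cri\text{-INCE}}$ it is the usual density-ratio optimum, pinned down only up to an additive function of the anchor. In every case $-\cri^{\star}(\Vz,\tilde{\Vz}) = \log\big(\pjoint(\Vx,\tilde{\Vx})/(\pref(\Vx)\pref(\tilde{\Vx}))\big)$ (plus a function of $\Vx$ for INCE). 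Crucially, this optimum is attainable inside the parametrization $\cri = \hat{\dist} + \alpha + \tilde{\alpha}$: taking $\enc = \gen^{-1}$, $\alpha = \log Z$, $\tilde{\alpha} = \log\pref - \log Q$ achieves it (granting universal approximation for $\enc,\alpha,\tilde{\alpha}$), so every minimizer realizes this value.

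\emph{Step 2 (functional equation).} Since $\gen$ is a diffeomorphism onto its image, the Jacobian factors cancel in the change of variables, so $\pjoint(\Vx,\tilde{\Vx})/(\pref(\Vx)\pref(\tilde{\Vx})) = p(\tilde{\Vs}\mid\Vs)/p(\tilde{\Vs})$. Substituting Eq.~\eqref{eq:data}, taking logarithms, and using $\hat{\dist} = \dist$ together with $\Vz = \rep(\Vs)$, $\tilde{\Vz} = \rep(\tilde{\Vs})$, all single-variable terms ($\log Z$, $\log Q$, $\log\pref$, $\alpha\circ\rep$, $\tilde{\alpha}\circ\rep$, and the INCE anchor term) collect into functions of $\Vs$ and of $\tilde{\Vs}$ alone, yielding $\dist(\rep(\Vs),\rep(\tilde{\Vs})) = \dist(\Vs,\tilde{\Vs}) + a(\Vs) + b(\tilde{\Vs})$ for all $\Vs,\tilde{\Vs}\in\mathcal{S}$. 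Putting $\tilde{\Vs}=\Vs$ and using that $\dist$ takes the same constant value on the diagonal of $\mathcal{S}$ and of $\mathcal{Z}$ forces $b = -a$; symmetry of $\dist$ then forces $a$ to be constant, so $\rep$ is a $\dist$-isometry: $\dist(\rep(\Vs),\rep(\tilde{\Vs})) = \dist(\Vs,\tilde{\Vs})$.

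\emph{Step 3 (isometry $\Rightarrow$ affine), and the main obstacle.} First note $\rep$ is a continuous injection (injective because $\dist$ separates points; continuous because $\dist(\rep(\Vs_n),\rep(\Vs))$ tends to the diagonal value whenever $\Vs_n \to \Vs$), hence open onto its image by invariance of domain. In case (i), $\xi\circ\dist$ is a norm-induced metric preserved by $\rep$, and a Mazur--Ulam / Mankiewicz argument for isometries between open connected subsets of finite-dimensional normed spaces gives that $\rep$ is affine. In case (ii), assuming enough smoothness to differentiate, I would expand $\sum_i \dist_i(|\rep_i(\Vs)-\rep_i(\tilde{\Vs})|) = \sum_i \dist_i(|s_i-\tilde{s}_i|)$ to second order at $\tilde{\Vs}=\Vs$ to get $D\rep(\Vs)^{\mathsf{T}}\Lambda\, D\rep(\Vs) = \Lambda$ pointwise with $\Lambda = \diag(\dist_i''(0))$ (so $D\rep$ is everywhere invertible), and to third order to obtain the standard relation forcing the Hessian of each component $\rep_i$ to vanish; since $\mathcal{S}$ is connected, each $\rep_i$, hence $\rep$, is affine (the subcase where some $\dist_i$ has a corner at $0$ reduces locally to a norm-induced metric and is handled as in case (i)). I expect Step~3 to be the hard part: handling a possibly non-surjective isometry (resolved via invariance of domain plus Mankiewicz's extension of Mazur--Ulam) and carrying out the second/third-order bookkeeping in case (ii) under minimal regularity assumptions on $\dist$ and on the learned encoder; by contrast, Steps~1--2 are conceptually the crux (recognizing the shared optimal critic and that $\alpha,\tilde{\alpha}$ are exactly rich enough to absorb $Q$, $Z$, and $\pref$) but technically routine.
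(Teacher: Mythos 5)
Your Steps 1 and 2 match the paper's route (its Lemma~1 identifies the same optimal critic for all four losses, via the Bayes-optimal classifier for NCE, a ranking-loss theorem or Taylor/variational expansion for InfoNCE, SCL and NWJ; its Lemma~2 performs exactly your diagonal-plus-symmetry cancellation to conclude $\dist(\rep(\Vs),\rep(\tilde{\Vs}))=\dist(\Vs,\tilde{\Vs})$ and then injectivity and invariance of domain), and your case~(i) of Step~3 is the paper's appeal to Mankiewicz's extension theorem. The genuine gap is in your case~(ii). Your second/third-order expansion along the diagonal requires (a) that $\rep$ be $C^2$ or $C^3$, which is not available --- the optimizer is only known to be a continuous injection, and the paper explicitly advertises that, unlike prior work, it does \emph{not} assume the encoder is twice differentiable --- and (b) that $\Lambda=\diag(\dist_i''(0))$ be finite and nondegenerate. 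For the canonical instances $\dist_i(t)=(t/\sigma_i)^{\beta}$ covered by the theorem, $\dist_i''(0)$ is $0$ for $\beta>2$ and $+\infty$ for $\beta<2$, so the expansion is degenerate in precisely the identifiable regimes (the one case where it is well-behaved, $\beta=2$, is the quasi-Gaussian case). Your fallback of reducing a ``corner at $0$'' to a norm-induced metric also fails for, e.g., $\dist_i(t)=t^{1/2}$: the distance $\sum_i|s_i-\tilde{s}_i|^{1/2}$ is translation invariant but not absolutely homogeneous, and its homogenization $\bigl(\sum_i|s_i-\tilde{s}_i|^{1/2}\bigr)^{2}$ violates the triangle inequality, so no $\xi$ turns it into a norm-induced metric and Mazur--Ulam/Mankiewicz does not apply.

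The paper closes this case with a purely metric argument that needs only continuity and strict monotonicity of the $\dist_i$: it shows that the points $a_i\Ve_i$ with $\dist(0,a_i\Ve_i)=r$ are $2$-extremal points of the ball $\mathcal{B}_r=\{\Vs:\dist(0,\Vs)\le r\}$, that the singleton $\{\Vs\}=(\mathcal{B}_r+\Vs+\Vs_0)\cap(\mathcal{B}_r+\Vs-\Vs_0)$ is preserved by the (translation-invariant, symmetric) isometry, hence $\phi(\Vs)=\tfrac12\bigl(\phi(\Vs+\Vs_0)+\phi(\Vs-\Vs_0)\bigr)$, and then that lines in coordinate directions map to lines with the correct parametrization, giving $\phi(\Vs+\sum_i a_i\Ve_i)=\phi(\Vs)+\sum_i a_i\phi(\Ve_i)$. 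To repair your proof you would need to replace the differential bookkeeping in case~(ii) with an argument of this midpoint-preservation type (or otherwise establish affinity of a mere continuous $\dist$-isometry without smoothness).
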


The central idea of the proof, which can be found in the supplementary material, is to show that the learned representation preserves the distance from the latent space, i.e., $\dist(\Vs, \tilde{\Vs}) = \dist(\rep(\Vs), \rep(\tilde{\Vs}))$.
Further, by constraining the dimension and connectivity of the latent space, such a mapping must be affine.
As a byproduct of the derivation, we obtain for the global optimum that
\begin{equation}\label{eq:alpha_tilde}
    \tilde{\alpha} \circ \rep= \log \pneg - \log Q
\end{equation}
and, except for $\mathcal{L}_{\cri\text{-INCE}}(\enc, \cri; \ratioince)$,
\begin{equation}\label{eq:alpha}
    \alpha \circ \rep = \log Z.
\end{equation}

We verify this with a simple example in Figure~\ref{fig:alpha}.
This means that when we optimise the InfoNCE objective with $\tilde{\alpha}$ set to zero, we implicitly assume that $Q=\pneg$.
This is true, for example, when the marginal distribution is uniform and $Q$ is constant.
Thus, our theorem is consistent with the perspective of uniformity and alignment \citep{wang2020understanding} and can be seen as a generalization of Theorem~5 from \citep{cl_ica}.
In particular, we extend their results to other contrastive losses, nonconvex latent spaces, nonuniform marginal distributions, and even to the case where the latent factors are not conditionally independent.

Next we show strong identifiability when $\dist$ in the conditional distribution of the latent factors is further restricted.
The proof can be found in the supplementary material.
\begin{theorem}[Strong identifiability]\label{theorem:gpm}
    Assume that all conditions in Theorem~\ref{theorem:affine} are satisfied.
    Let the function $d$ from Eq.~\eqref{eq:data} be defined by
    \begin{equation}\label{eq:generalized_gaussian}
        \dist(\Vs, \tilde{\Vs}) = \sum_i (|s_i - \tilde{s}_i|/\sigma_i)^{\beta},
    \end{equation}
    with $\beta \in (0, 2) \cup (2, \infty)$ and $\sigma_i>0$ for all $i$, then $\rep = \enc \circ \gen$ is a generalized permutation matrix, i.e., a composition of a permutation and element-wise scaling and sign flips.
\end{theorem}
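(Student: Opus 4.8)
The plan is to bootstrap from Theorem~\ref{theorem:affine}, whose proof supplies two facts I will take as given: that $\rep=\enc\circ\gen$ is affine, $\rep(\Vs)=A\Vs+\Vb$, and that it preserves the latent distance, $\dist(\Vs,\tilde{\Vs})=\dist(\rep(\Vs),\rep(\tilde{\Vs}))$ on $\mathcal{S}\times\mathcal{S}$. Since $\dist$ of Eq.~\eqref{eq:generalized_gaussian} depends only on $\Vu=\Vs-\tilde{\Vs}$, I would write $\rho(\Vu)=\sum_i w_i|u_i|^{\beta}$ with $w_i=\sigma_i^{-\beta}>0$, so that distance preservation reads $\rho(\Vu)=\rho(A\Vu)$ for all $\Vu$ in some neighbourhood of the origin — one exists inside the difference set $\mathcal{S}-\mathcal{S}$ because $\mathcal{S}$ is open. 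As $\rho$ is positively homogeneous of degree $\beta$, rescaling $\Vu$ extends this to $\rho(\Vu)=\rho(A\Vu)$ for every $\Vu\in\mathbb{R}^{n}$; equivalently, $A$ preserves the weighted $\ell^{\beta}$ expression $\rho$, and the theorem reduces to the classical fact that for $\beta\neq2$ the only such linear maps are signed permutations (up to the weights $w_i$), which is what the remaining steps establish.

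Next I would extract the combinatorial structure of $A$. Evaluating $\rho(\Vu)=\rho(A\Vu)$ at $\Vu=e_k$ gives $w_k=\sum_i w_i|A_{ik}|^{\beta}$, and at $\Vu=e_k+t\,e_l$ with $k\neq l$, $t\in\mathbb{R}$,
\begin{equation*}
  S(t)\;+\;|t|^{\beta}\!\!\sum_{i:\,A_{ik}=0}\!\! w_i\,|A_{il}|^{\beta}\;=\;w_k+w_l\,|t|^{\beta},\qquad S(t):=\!\!\sum_{i:\,A_{ik}\neq 0}\!\! w_i\,|A_{ik}+tA_{il}|^{\beta}.
\end{equation*}
Each summand of $S$ is $w_i\big((A_{ik}+tA_{il})^{2}\big)^{\beta/2}$ with $A_{ik}\neq0$, hence real-analytic near $t=0$, and $S(0)=w_k$; so $S(t)-w_k$ is real-analytic and proportional to $|t|^{\beta}$ near $0$. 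Since $|t|^{\beta}$ is not real-analytic at $0$ unless $\beta$ is an even integer, for $\beta\notin2\mathbb{Z}$ the proportionality constant vanishes, giving $S(t)\equiv w_k$ and $\sum_{i:A_{ik}=0}w_i|A_{il}|^{\beta}=w_l$; subtracting this from $w_l=\sum_i w_i|A_{il}|^{\beta}$ (the $\Vu=e_l$ identity) forces $A_{il}=0$ whenever $A_{ik}\neq0$, i.e.\ columns $k$ and $l$ of $A$ have disjoint supports. For the even integers $\beta\in\{4,6,\dots\}$ still in $(2,\infty)$, $\rho$ is a polynomial and $\rho(A\Vu)\equiv\rho(\Vu)$ a polynomial identity; I would match the coefficient of $u_k^{\beta-2}u_l^{2}$, obtaining $\sum_i w_i A_{ik}^{\beta-2}A_{il}^{2}=0$, a sum of nonnegative terms, so again $A_{ik}A_{il}=0$ for all $i$. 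Either way the columns of $A$ are pairwise disjointly supported.

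To conclude, $A$ is invertible (if $A\Vu=0$ then $\rho(\Vu)=\rho(A\Vu)=0$, so $\Vu=0$), and an invertible $n\times n$ matrix with $n$ pairwise disjointly supported columns has exactly one nonzero entry per column: $A_{ik}=c_k$ for $i=\pi(k)$ and $0$ otherwise, with $\pi$ a permutation and each $c_k\neq0$. The identity $w_k=\sum_i w_i|A_{ik}|^{\beta}=w_{\pi(k)}|c_k|^{\beta}$ then fixes $|c_k|=\sigma_{\pi(k)}/\sigma_k$ while leaving the signs free, so $A$ is a generalized permutation matrix, i.e.\ a permutation composed with coordinate-wise scalings and sign flips. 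This is the assertion for the linear part of $\rep$; as in Theorem~\ref{theorem:affine} an additive offset may be present, and I would suppress it since it does not affect this structure.

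The main obstacle is the middle step, excluding isometries other than signed permutations. The clean argument via non-analyticity of $t\mapsto|t|^{\beta}$ at the origin is precisely what fails when $\beta$ is an even integer, and since the hypothesis rules out only $\beta=2$ (not $\beta=4,6,\dots$), that family genuinely needs the separate polynomial-coefficient argument; there one must choose a monomial such as $u_k^{\beta-2}u_l^{2}$ whose coefficient is manifestly a sum of nonnegative terms, so that positivity alone separates the supports. A secondary point of care is the reduction itself: confirming that Theorem~\ref{theorem:affine} really yields the pointwise identity $\dist(\Vs,\tilde{\Vs})=\dist(\rep(\Vs),\rep(\tilde{\Vs}))$ on all of $\mathcal{S}\times\mathcal{S}$ rather than only on the support of $\pjoint$ (modulo continuity), and that the homogeneity extension off the neighbourhood is valid.
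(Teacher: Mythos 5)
Your proof is correct, and it shares the paper's overall reduction: Theorem~\ref{theorem:affine} gives $\rep(\Vs)=\VA\Vs+\Vb$, Lemma~2 of the supplement gives the pointwise distance preservation $\dist(\rep(\Vs),\rep(\tilde{\Vs}))=\dist(\Vs,\tilde{\Vs})$ on all of $\mathcal{S}\times\mathcal{S}$ (so your first flagged caveat is a non-issue), and the task reduces to classifying linear maps preserving $\rho(\Vu)=\sum_i w_i|u_i|^{\beta}$. Where you diverge is in that classification. The paper splits on $\beta\geq 1$ versus $0<\beta<1$: for $\beta\geq 1$ it sets $\xi(x)=x^{1/\beta}$ and invokes the classical result that linear isometries of $\ell^{\beta}$, $\beta\neq 2$, are generalized permutation matrices; for $0<\beta<1$ it gives an elementary sign argument from $\dist(0,\Ve_i\pm\Ve_j)=2$ and the subadditivity $|a+b|^{\beta}\leq|a|^{\beta}+|b|^{\beta}$, forcing two entries in the same row of distinct columns to have simultaneously equal and opposite signs, hence at least one is zero. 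Your route is instead a single self-contained argument: non-analyticity of $t\mapsto|t|^{\beta}$ at $0$ for $\beta\notin 2\mathbb{Z}$ kills the cross terms and yields disjointly supported columns, with a separate polynomial coefficient-matching step (the coefficient of $u_k^{\beta-2}u_l^2$ being a sum of nonnegative terms) for even integers $\beta\geq 4$; both then finish identically via invertibility. What your version buys is independence from the cited isometry theorem and an explicit handling of the weights $\sigma_i$ (the paper's computations such as $\dist(0,\Ve_i)=1$ implicitly normalize $\sigma_i=1$, which strictly requires a preliminary diagonal rescaling); what it costs is the extra case split at even $\beta$, which the citation-based route absorbs. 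The homogeneity extension from a neighbourhood of the origin in $\mathcal{S}-\mathcal{S}$ to all of $\mathbb{R}^n$ is valid as you state it.
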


This result is similar to earlier theorems \citep{slow_vae, cl_ica}, but relaxes some conditions.
In particular, it also holds for nonuniform marginals and $\beta \in (0, 1)$.
For the case $\beta=2$, the conditional distribution is quasi-Gaussian \citep{pcl} and, as with linear Independent Component Analysis (ICA) for normally distributed sources, strong identifiability is then provably impossible.

In contrast to \citep{gcl}, we show identifiability for the generalized normal distribution (excluding $\beta = 2$) and do not assume some regularity conditions, e.g., that the encoder is twice differentiable and invertible.
However, we do not claim to generalize their results, since the exponential family contains distributions that we do not consider here.

Theorem~\ref{theorem:gpm} also has an interesting connection to some recently proposed disentanglement methods that rely on sparsity \citep{lachapelle2022disentanglement, ahuja2022weakly}.
This situation is similar to the case when $\beta \to 0$ in Eq.~\eqref{eq:generalized_gaussian} and the conditional $p(\tilde{\Vs} | \Vs)$ becomes sparse.
However, this is a topic for future work.

\section{Experiments}\label{sec:experiments}
In this section, we conduct a comprehensive quantitative evaluation of our theoretical framework.
We largely adopt the experimental setup and evaluation protocol of \citep{cl_ica} with one major difference.
During training, we additionally optimize for $\alpha$ and $\tilde{\alpha}$ in Eq. ~\eqref{eq:critic}, both of which are parameterized by three-layer neural networks, respectively.
In practice, both functions are normalized to a mean of zero and are jointly trained with a learnable offset $c$, i.e., $\cri(\Vz, \tilde{\Vz}) = \hat{\dist}(\Vz, \tilde{\Vz}) + \alpha(\Vz) + \tilde{\alpha}(\tilde{\Vz}) + c$.
In the case of $\cri$-INCE, we use $\cri(\Vz, \tilde{\Vz}) = \hat{\dist}(\Vz, \tilde{\Vz}) + \tilde{\alpha}(\tilde{\Vz})$ instead.
Further details on the implementation can be found in the supplementary material.

As in previous research \citep{tcl, pcl, cl_ica}, we test for weak identifiability, i.e., up to affine transformations, by fitting a linear regression model between the ground-truth and recovered latents and compute the coefficient of determination ($R^2$). 
For strong identifiability, i.e., up to generalized permutations and element-wise transformations, we report the mean correlation coefficient (MCC).

\subsection{Validation of Theoretical Findings}\label{sec:synthetic}

To validate our theoretical claims, we adopt a generative process similar to \citep{cl_ica}.
However, to demonstrate that identifiability can be achieved for nonconvex latent spaces, nonuniform marginals, and without conditionally independent latents, we investigate additional configurations of these components.

In this section, we consider latent spaces with $n=10$ dimensions.
We generate pairs of source signals $(\Vs, \tilde{\Vs})$ by first sampling from a marginal distribution $\pref(\Vs)$ and subsequently from a conditional distribution
\begin{equation}
    p(\tilde{\Vs}|\Vs) = \frac{Q(\tilde{\Vs})}{Z(\Vs)}\eul^{-\sum_i
    \left(\frac{|s_i-\tilde{s}_i|}{\sigma}\right)^{\beta}}.
\end{equation}
The observations $(\Vx, \tilde{\Vx})$ are then generated by passing $\Vs$ and $\tilde{\Vs}$ individually through an (approximately) invertible multilayer perceptron (MLP).

We analyze the following four scenarios where we vary $\mathcal{S}$, $p(\Vs)$, $Q$ and $\beta$:

\begin{itemize}
    \item Box (simple): $\mathcal{S} = [0, 1]^n$, $p(\Vs)$ is uniform, $Q$ is constant. This is our baseline.
    \item Box (complex): $\mathcal{S} = [0, 1]^n$, $p(\Vs)$ is a normal distribution with block-diagonal correlation matrix such that the odd dimensions are correlated with the even dimensions, $Q$ has a checkerboard pattern (i.e., has value 1 on ``white squares'' and 0.1 on ``black squares''), so the latent factors are not conditionally independent.
    \item Hollow ball: $\mathcal{S} = \{\Vs \in \mathbb{R}^n \mid r < \lVert \Vs \rVert < R, 0<r<R\} $ with inner radius $r$ and outer radius $R$, $p(\Vs)$ is uniform over the radius and angle (but not uniform in Euclidean coordinates), $Q$ is constant.
    Note that here $\tilde{s}_i$ and $\tilde{s}_j$ are not conditionally independent given $\Vs$, because the support is not rectangular. 
    \item Cube grid: $\mathcal{S} = \{\Vs \in [-1, 1]^n \mid \forall i: s_i<-b \,\lor\, s_i > b, 0<b<1\}$ (i.e., $\mathcal{S}$ is disconnected), $p(\Vs)$ is uniform, $Q$ is constant.
    Although our current analysis makes no predictions about disconnected spaces, we investigate this configuration to 
    empirically probe potential limits of our framework.
\end{itemize}

To fit our hardware setup, we use a smaller batch size of 5120. Also, we use an encoder network with residual connections and batch normalization, which has been shown to be more stable.
We generally leave the output space unrestricted, i.e., $\mathcal{Z}=\mathbb{R}^n$.

Table~\ref{tab:synthetic} provides a summary of the MCC scores.
The corresponding $R^2$ values can be found in the supplementary material.
In each scenario, all contrastive losses achieve near-optimal disentanglement, with the exception of a few outliers.
In particular, $\cri$-SCL appears to be numerically less stable than its counterparts, which we will examine in more detail in later sections.
Additional experiments in the supplementary material compare $\cri$-SCL with the original SCL and show that $\cri$-SCL generally yields higher disentanglement scores.
However, we would like to point out that one should not draw direct conclusions from these results about the performance of the original SCL on other downstream tasks.

\begin{table}
    \caption{Identifiability on synthetic data. MCC [\%] mean $\pm$ standard deviation over 2 random seeds}
    \label{tab:synthetic}
    \centering
    \begin{tabular}{lcllll}
        \toprule
        Scenario   & $\beta$    & $\cri$-NCE  & $\cri$-INCE    & $\cri$-SCL    & $\cri$-NWJ \\
        \midrule
        Box (simple)    & 1/2           & 99.68 $\pm$ 0.04      & 99.56 $\pm$ 0.18      & 87.08 $\pm$ 1.40      & 99.80 $\pm$ 0.02 \\
        Box (simple)    & 1             & 99.91 $\pm$ 0.01      & 99.90 $\pm$ 0.01      & 94.33 $\pm$ 2.02      & 99.87 $\pm$ 0.01 \\
        Box (simple)    & 3             & 99.66 $\pm$ 0.20      & 96.97 $\pm$ 2.47      & 98.29 $\pm$ 0.14      & 99.71 $\pm$ 0.05 \\
        Box (simple)    & 5             & 99.79 $\pm$ 0.02      & 96.56 $\pm$ 2.44      & 98.63 $\pm$ 0.36      & 99.74 $\pm$ 0.00 \\
        \midrule
        Box (complex)   & 1             & 99.13 $\pm$ 0.41      & 99.87 $\pm$ 0.05      & 83.73 $\pm$ 2.49      & 93.14 $\pm$ 4.97 \\
        Box (complex)   & 3             & 99.90 $\pm$ 0.00      & 99.84 $\pm$ 0.00      & 93.64 $\pm$ 0.22      & 99.84 $\pm$ 0.00 \\
        \midrule
        Hollow ball     & 1             & 99.73 $\pm$ 0.11      & 99.70 $\pm$ 0.08      & 90.27 $\pm$ 6.48      & 99.73 $\pm$ 0.08 \\
        Hollow ball     & 3             & 95.82 $\pm$ 4.65      & 98.05 $\pm$ 0.08      & 97.65 $\pm$ 0.00      & 99.03 $\pm$ 0.06 \\
        Hollow ball     & 5             & 98.52 $\pm$ 0.21      & 96.25 $\pm$ 0.43      & 97.57 $\pm$ 0.06      & 98.67 $\pm$ 0.16 \\
        \midrule
        Cube grid       & 1             & 99.87 $\pm$ 0.02      & 99.66 $\pm$ 0.04      & 96.92 $\pm$ 1.88      & 99.79 $\pm$ 0.01 \\
        Cube grid       & 5             & 97.42 $\pm$ 0.12      & 82.26 $\pm$ 7.06      & 96.17 $\pm$ 0.07      & 97.96 $\pm$ 0.06 \\
        \bottomrule
    \end{tabular}
\end{table}

To verify that the learned functions  $\tilde{\alpha}$ and $\alpha$ converge to the solution given in Eq.~\eqref{eq:alpha_tilde} and Eq.~\eqref{eq:alpha}, we construct a simple example where the ground truth is known.
We choose $\mathcal{S}=\mathcal{Z}=[0,1]^2$ with a uniform marginal distribution.
The positive examples are sampled from a truncated Laplace distribution.
Figure~\ref{fig:alpha} shows the results.
Details and additional experimental setups can be found in the supplementary material.

Since we learn the sum of $\alpha$ and $\tilde{\alpha}$, they can only be identified up to some offset.
In Figure~\ref{fig:alpha}, we have added the learned bias to $\alpha$ to confirm that the total offset is correct.
All losses reach the global optimum except for $\cri$-SCL, which appears to be stuck at a local minimum.

\begin{figure*}
    \centering
    \includegraphics[width=0.98\linewidth]{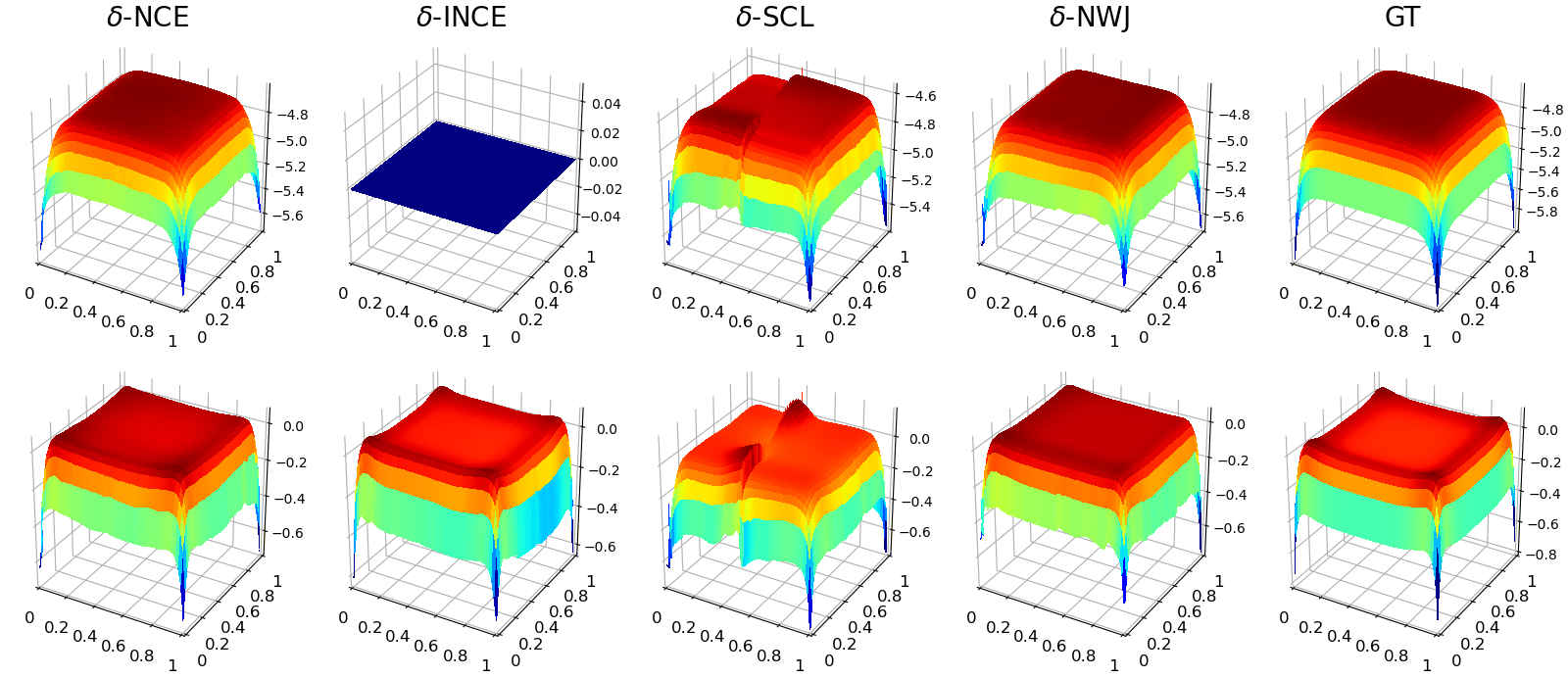}
    \caption{Learned functions $\alpha$ (top row) and $\tilde{\alpha}$ (bottom row) for a simple example described in the text. The ground truth is in the right column. In the case of $\cri$-INCE, $\alpha$ is set to zero.}
    \label{fig:alpha}
\end{figure*}

\subsection{Evaluation under Partially Violated Assumptions}\label{sec:synthetic_violated}

We also report results for the case where $\alpha$ and $\tilde{\alpha}$ are set to a learnable constant (see Table~\ref{tab:synthetic_noalpha}).
Interestingly, we do not observe that this change significantly affects the outcome.
However, we do observe a drastic performance drop in $\cri$-NCE, $\cri$-SCL and $\cri$-NWJ when this offset is fixed to a value which is far from the ground truth.
Specifically, setting the bias to zero causes a drop in the MCC score by more than 20\% on average when $\beta=1$.
For larger $\beta$ we do not observe different behaviour.
This is likely due to the fact that the optimal bias is far from 0 when $\beta$ is smaller, which can distort the learned representation.

\begin{table}
    \caption{MCC [\%] scores on synthetic data for $\alpha(\Vz) = \tilde{\alpha}(\tilde{\Vz}) = c$}
    \label{tab:synthetic_noalpha}
    \centering
    \begin{tabular}{lcllll}
        \toprule
        Scenario   & $\beta$    & $\cri$-NCE  & $\cri$-INCE    & $\cri$-SCL    & $\cri$-NWJ \\
        \midrule
        Box (simple)    & 1/2           & 99.57 & 99.81 & 82.69 & 89.22 \\
        Box (simple)    & 1             & 99.85 & 99.91 & 94.00 & 99.81 \\
        Box (simple)    & 3             & 99.77 & 99.40 & 99.78 & 99.76 \\
        Box (simple)    & 5             & 99.82 & 98.86 & 94.84 & 99.82 \\
        \midrule
        Box (complex)   & 1             & 99.70 & 99.82 & 76.63 & 99.68 \\
        Box (complex)   & 3             & 99.69 & 99.75 & 89.78 & 99.51 \\
        \midrule
        Hollow ball     & 1             & 98.49 & 99.37 & 87.31 & 97.94 \\
        Hollow ball     & 3             & 98.59 & 97.23 & 96.39 & 98.64 \\
        Hollow ball     & 5             & 98.52 & 96.26 & 98.49 & 98.52 \\
        \midrule
        Cube grid       & 1             & 99.82 & 99.39 & 98.21 & 99.75 \\
        Cube grid       & 5             & 96.53 & 84.73 & 97.50 & 97.42 \\
        \bottomrule
    \end{tabular}
\end{table}

\subsection{Evaluation on Kitti Masks}\label{sec:kittie}

The KITTI Masks dataset \citep{slow_vae} consists of segmentation masks of pedestrians extracted from the autonomous driving benchmark KITTI-MOTS \citep{geiger2012we}.
We compare the presented loss functions with the current state of the art for this dataset \citep{slow_vae, cl_ica}.
Except for the additional parameters in $\cri$, we use the same settings and follow their evaluation protocol.

As noted by \citet{slow_vae}, the transitions of the latents are mostly sparse.
It is therefore not surprising that all contrastive losses consistently perform better when a Laplace conditional is assumed as opposed to a normal conditional.
Considering the large variance across different random seeds, all contrastive losses perform similarly well, with the largest outlier being $\cri$-SCL, which could already be observed on the synthetic dataset in the previous section.

Overall, there are only small differences between the different embedding spaces, with about half of the methods performing better on each setup.
Consistent with previous observations \cite{cl_ica}, we see for all models slightly better performance for larger time intervals, $\overline{\Delta t}$, between consecutive frames.
One possible reason is that a larger concentration of the conditional leads to a larger range of $\cri$ values, causing numerical instability.
We investigate this phenomenon in more detail in Section~\ref{sec:limitations}.

\begin{table}
    \caption{KITTI Masks. MCC [\%] mean $\pm$ standard deviation over 10 random seeds}
    \label{tab:kitti1}
    \centering
    \begin{tabular}{llllll}
        \toprule
        & & \multicolumn{2}{c}{Laplace} & \multicolumn{2}{c}{Normal} \\
        \cmidrule(r){3-4}
        \cmidrule(r){5-6}
        $\overline{\Delta t}$  & Loss    & Unbounded & Box   & Unbounded & Box \\
        \midrule
        0.05s   & SlowVAE \citep{slow_vae} & 66.1 $\pm$ 4.5 & & & \\
        \ditto  & $\delta$-contr \citep{cl_ica} & 77.1 $\pm$ 1.0 & 74.1 $\pm$ 4.4 & 58.3 $\pm$ 5.4 & 59.9 $\pm$ 5.5 \\
        \ditto  & $\cri$-INCE & 77.3 $\pm$ 1.5        & 75.9 $\pm$ 1.7        & 68.1 $\pm$ 5.0        & 68.3 $\pm$ 4.8 \\
        \ditto  & $\cri$-NCE  & 77.1 $\pm$ 1.0        & 75.4 $\pm$ 2.4        & 60.2 $\pm$ 4.9        & 68.0 $\pm$ 1.5 \\
        \ditto  & $\cri$-SCL  & 75.8 $\pm$ 3.9        & 71.5 $\pm$ 7.5        & 66.2 $\pm$ 10.7       & 66.5 $\pm$ 4.0 \\
        \ditto  & $\cri$-NWJ  & 77.3 $\pm$ 0.7        & 77.7 $\pm$ 0.9        & 67.5 $\pm$ 5.3        & 69.2 $\pm$ 5.3 \\
        \midrule
        0.15s   & SlowVAE \citep{slow_vae} & 79.6 $\pm$ 5.8 & & & \\
        \ditto   & $\delta$-contr \citep{cl_ica} & 79.4 $\pm$ 1.9 & 80.9 $\pm$ 3.8 & 60.2 $\pm$ 8.7 & 68.4 $\pm$ 6.7 \\
        \ditto  & $\cri$-INCE & 80.5 $\pm$ 1.4        & 77.0 $\pm$ 3.3        & 68.9 $\pm$ 5.2        & 68.4 $\pm$ 4.2 \\
        \ditto  & $\cri$-NCE  & 79.3 $\pm$ 2.3        & 79.5 $\pm$ 2.7        & 62.7 $\pm$ 4.6        & 68.4 $\pm$ 3.5 \\
        \ditto  & $\cri$-SCL  & 72.9 $\pm$ 3.9        & 74.4 $\pm$ 2.6        & 62.5 $\pm$ 5.0        & 66.1 $\pm$ 5.7 \\
        \ditto  & $\cri$-NWJ  & 79.3 $\pm$ 2.0        & 78.2 $\pm$ 6.1        & 68.2 $\pm$ 3.5        & 72.6 $\pm$ 6.7 \\
        \bottomrule
    \end{tabular}
\end{table}

\subsection{Evaluation on 3DIdent}\label{sec:3dident}

\begin{table}
    \caption{3DIdent. $R^2$ [\%] and MCC [\%] mean $\pm$ standard deviation over 2 random seeds}
    \label{tab:3dident}
    \centering
    \begin{tabular}{llllll}
        \toprule
        & & \multicolumn{2}{c}{$R^2$} & \multicolumn{2}{c}{MCC} \\
        \cmidrule(r){3-4}
        \cmidrule(r){5-6}
        $\beta$  & Loss    & Unbounded & Box   & Unbounded & Box \\
        \midrule
        2   & $\delta$-contr \citep{cl_ica} & 96.43 $\pm$ 0.03 &	96.73 $\pm$ 0.10	& 54.94 $\pm$ 0.02	& 98.31 $\pm$ 0.04 \\
        2   & $\cri$-INCE & 97.87 $\pm$ 0.07        & 98.08 $\pm$ 0.03        & 56.09 $\pm$ 0.10        & 98.98 $\pm$ 0.09 \\
        2   & $\cri$-NCE  & 98.05 $\pm$ 0.04        & 97.91 $\pm$ 0.02        & 55.34 $\pm$ 0.11        & 98.95 $\pm$ 0.08 \\
        2   & $\cri$-SCL  & 71.07 $\pm$ 3.76        & 74.92 $\pm$ 2.28        & 49.88 $\pm$ 1.20        & 76.02 $\pm$ 3.13 \\
        2   & $\cri$-NWJ  & 94.03 $\pm$ 1.35        & 97.67 $\pm$ 0.02        & 53.82 $\pm$ 0.46        & 98.79 $\pm$ 0.02 \\
        \midrule
        1   & $\delta$-contr \citep{cl_ica} &  &	96.87 $\pm$ 0.08	& & 98.38 $\pm$ 0.03 \\
        1   & $\cri$-INCE & 98.03 $\pm$ 0.06        & 97.45 $\pm$ 0.04        & 55.79 $\pm$ 0.09        & 98.63 $\pm$ 0.05 \\
        1   & $\cri$-NCE  & 97.83 $\pm$ 0.04        & 97.33 $\pm$ 0.03        & 56.35 $\pm$ 0.08        & 98.71 $\pm$ 0.02 \\
        1   & $\cri$-SCL  & 78.59 $\pm$ 3.92        & 93.13 $\pm$ 1.83        & 52.28 $\pm$ 1.98        & 95.77 $\pm$ 2.58 \\
        1   & $\cri$-NWJ  & 94.02 $\pm$ 1.28        & 97.64 $\pm$ 0.09        & 54.87 $\pm$ 0.33        & 98.82 $\pm$ 0.03 \\
        \bottomrule 
    \end{tabular}
\end{table}

We additionally evaluate our framework under partially violated assumptions on 3DIdent \citep{cl_ica}, a recently proposed benchmark dataset.
The dataset is composed of rendered images showing a complex object in different positions, rotations and under different lighting conditions. 
Positive pairs $(\Vs, \tilde{\Vs})$ are obtained by randomly selecting $\Vs$ and matching sampled $\tilde{\Vs}'$ from the conditional $p(\tilde{\Vs}'|\Vs)$ to the closest $\tilde{\Vs}$ with a rendered image.
We consider a normal ($\beta=2$) and a Laplace conditional ($\beta=1$).
In both cases we set $\hat{d}(\Vz, \tilde{\Vz}) = \norm{\Vz - \tilde{\Vz}}_2^2$ where we either leave the output space unbounded ($\mathcal{Z}=\mathbb{R}^n$) or restrict it to an adjustable box ($\mathcal{Z}=[0, b]^n$, where $b$ is a learnable parameter).
The latter is achieved by applying a sigmoid function in the last layer of the network and multiplying the result by $b$.
Table~\ref{tab:3dident} shows the results for weak and strong identifiability.

With the exception of $\cri$-SCL, the examined contrastive losses have $R^2$ values close to the theoretical optimum.
However, $\cri$-NWJ shows less robustness when we do not restrict the latent space compared to $\cri$-NCE and $\cri$-INCE.
This demonstrates that incorporating knowledge about the latent space can stabilize training and lead to strong identifiability, even in the case of a (truncated) Gaussian conditional distribution.

\subsection{Investigation of Limitations}\label{sec:limitations}

\begin{figure}
    \centering
    \includegraphics[width=1\linewidth]{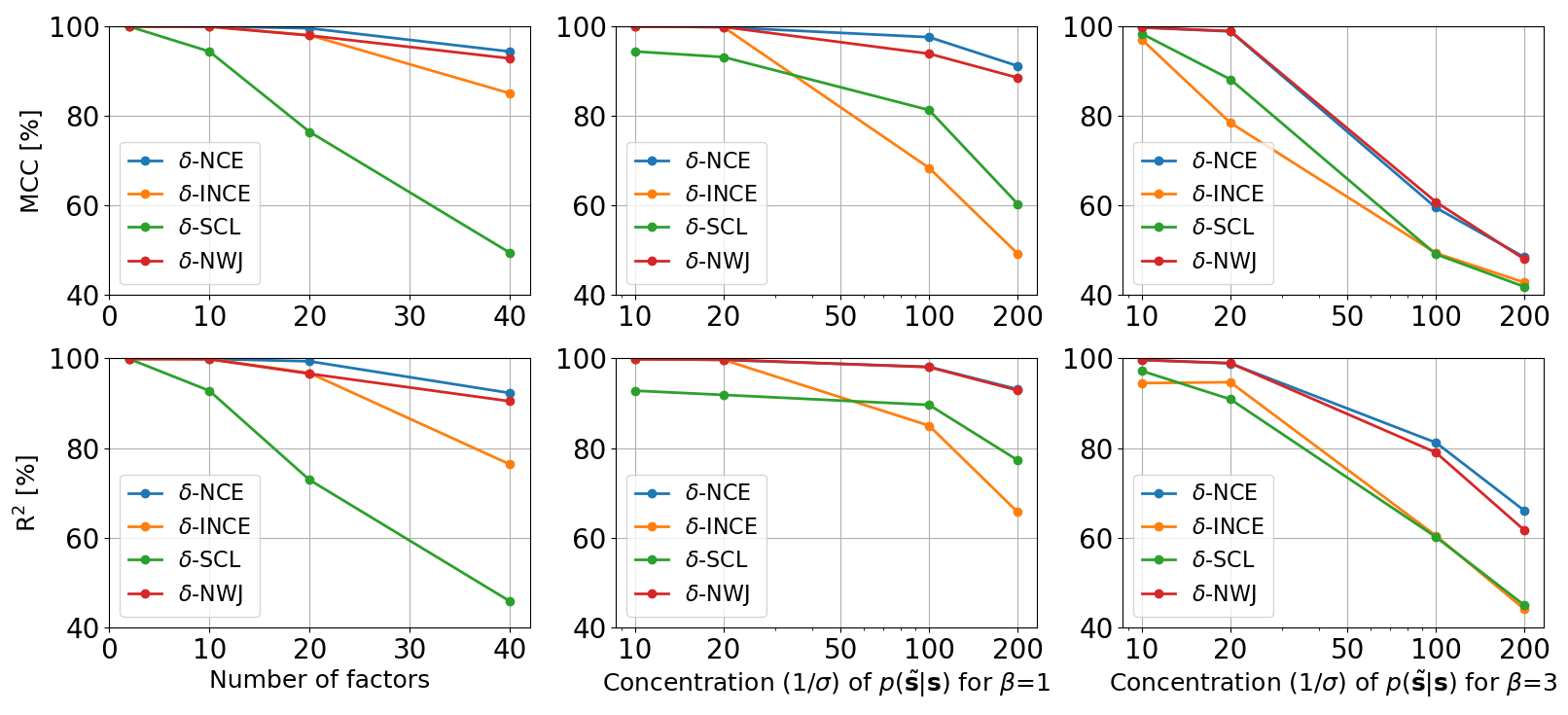}
    \caption{MCC (top) and $\mathrm{R}^2$ scores (bottom) for different numbers of latent factors (left) and for different concentrations ($1/\sigma$) of the conditional distribution for $\beta=1$ (center) and $\beta=3$ (right).}
    \label{fig:limitations}
\end{figure}

In this section, we discuss practical and theoretical limitations of the proposed framework.
We first examine how changes in the number of latent factors and the concentration of the conditional distribution affect disentanglement.
We use the same setup as in Section~\ref{sec:synthetic}, except that we choose a slightly smaller batch size of 4096 and train for \num{8e5} iterations to ensure convergence when we consider more dimensions.
The results are shown in Figure~\ref{fig:limitations}.

First, we note that all methods have difficulties when the number of latent factors is increased (see Figure~\ref{fig:limitations}, left column).
In particular, $\cri$-SCL shows little robustness for larger dimensions.

Furthermore, we see that the performance of all methods declines with increasing concentration ($1/\sigma$) of the conditional and is exacerbated by a larger shape parameter ($\beta$).
The work in \citep{cl_ica} noted that InfoNCE has difficulty disentangling the latent factors when the conditional distribution becomes too flat relative to the marginal one, which, they pointed out, makes positive examples indistinguishable from negative ones.
We observe here the same problem in the opposite case.

Consider the following example. Suppose $\mathcal{S}=[0,1]^n$ and $p(\Vs)$ is uniform, then the range of $\cri$ at the global optimum is $[0, n/\sigma^{\beta}]$.
For larger $\beta$ and smaller $\sigma$, this quickly leads to vanishing values when calculating the exponential in the contrastive losses.
This could also be one of the reasons why we see worse performance for smaller $\overline{\Delta t}$ in KITTI Masks. 

It should be noted that employing learnable functions in the similarity measure may entail certain drawbacks, such as the need to store additional parameters and a slight increase in training time.
Furthermore, we have seen in Section~\ref{sec:synthetic_violated} that in some cases the underlying data distribution can be adequately approximated by constant $\alpha$ and $\tilde{\alpha}$.
Although not observed in this work, it is also plausible that the inclusion of these functions for more complex datasets may increase the risk of getting stuck in poor local minima during training, as incorrect shapes of these functions may promote poor representations (the shape of $\alpha$ and $\tilde{\alpha}$ affects the gradient of $\enc$ and vice versa).

Finally, we want to emphasize that the mechanism that governs the observation pairs is highly dependent on the application and that the relationship between the latents may not be adequately described by a distance-based conditional as assumed in this work (Eq.~\eqref{eq:data}).
However, we leave the exploration of methods suitable for other mechanisms to future work.

\section{Conclusion}\label{sec:conclusion}

In this work, we propose a new theoretical framework of contrastive methods for learning disentangled representations.
In contrast to previous research, our framework accounts for nonuniform marginal distributions of the factors of variation, allows for nonconvex latent spaces, and does not assume that these factors are statistically independent or conditionally independent given an auxiliary variable.

Our theory unveils that contrastive methods implicitly encode assumptions about the data generating process.
We show empirically that even when these assumptions are partially violated, these methods learn to recover the true latent factors.
This study provides further evidence that contrastive methods learn to approximately invert the underlying generative process, which may explain their remarkable success in many applications.

\section{Acknowledgements}
This work has been carried out as part of the AuSeSol-AI Project which is funded by Federal Ministry for the Environment, Nature Conservation, Nuclear Safety and Consumer Protection (BMUV) under the Grant 67KI21007F.

{\small
\bibliographystyle{abbrvnat}
\bibliography{references}

\begin{thebibliography}{58}
\providecommand{\natexlab}[1]{#1}
\providecommand{\url}[1]{\texttt{#1}}
\expandafter\ifx\csname urlstyle\endcsname\relax
  \providecommand{\doi}[1]{doi: #1}\else
  \providecommand{\doi}{doi: \begingroup \urlstyle{rm}\Url}\fi

\bibitem[Ahuja et~al.(2022)Ahuja, Hartford, and Bengio]{ahuja2022weakly}
K.~Ahuja, J.~Hartford, and Y.~Bengio.
\newblock Weakly supervised representation learning with sparse perturbations.
\newblock \emph{arXiv preprint arXiv:2206.01101}, 2022.

\bibitem[Bachman et~al.(2019)Bachman, Hjelm, and
  Buchwalter]{bachman2019learning}
P.~Bachman, R.~D. Hjelm, and W.~Buchwalter.
\newblock Learning representations by maximizing mutual information across
  views.
\newblock \emph{Advances in neural information processing systems}, 32, 2019.

\bibitem[Barlow et~al.(1989)Barlow, Kaushal, and Mitchison]{barlow1989finding}
H.~B. Barlow, T.~P. Kaushal, and G.~J. Mitchison.
\newblock Finding minimum entropy codes.
\newblock \emph{Neural Computation}, 1\penalty0 (3):\penalty0 412--423, 1989.

\bibitem[Ben-Israel(1999)]{prob_transform}
A.~Ben-Israel.
\newblock The change-of-variables formula using matrix volume.
\newblock \emph{SIAM Journal on Matrix Analysis and Applications}, 21\penalty0
  (1):\penalty0 300--312, 1999.

\bibitem[Brehmer et~al.(2022)Brehmer, De~Haan, Lippe, and
  Cohen]{brehmer2022weakly}
J.~Brehmer, P.~De~Haan, P.~Lippe, and T.~S. Cohen.
\newblock Weakly supervised causal representation learning.
\newblock \emph{Advances in Neural Information Processing Systems},
  35:\penalty0 38319--38331, 2022.

\bibitem[Brouwer(1911)]{brouwer1911beweis}
L.~E. Brouwer.
\newblock Beweis der invarianz des n-dimensionalen gebiets.
\newblock \emph{Mathematische Annalen}, 71:\penalty0 305--313, 1911.

\bibitem[Cardoso(1998)]{cardoso1998blind}
J.-F. Cardoso.
\newblock Blind signal separation: statistical principles.
\newblock \emph{Proceedings of the IEEE}, 86\penalty0 (10):\penalty0
  2009--2025, 1998.

\bibitem[Chen et~al.(2020)Chen, Kornblith, Norouzi, and Hinton]{simclr}
T.~Chen, S.~Kornblith, M.~Norouzi, and G.~Hinton.
\newblock A simple framework for contrastive learning of visual
  representations.
\newblock In \emph{International conference on machine learning}, pages
  1597--1607. PMLR, 2020.

\bibitem[Chen et~al.(2021)Chen, Luo, and Li]{chen2021intriguing}
T.~Chen, C.~Luo, and L.~Li.
\newblock Intriguing properties of contrastive losses.
\newblock \emph{Advances in Neural Information Processing Systems},
  34:\penalty0 11834--11845, 2021.

\bibitem[Creager et~al.(2019)Creager, Madras, Jacobsen, Weis, Swersky, Pitassi,
  and Zemel]{creager2019flexibly}
E.~Creager, D.~Madras, J.-H. Jacobsen, M.~Weis, K.~Swersky, T.~Pitassi, and
  R.~Zemel.
\newblock Flexibly fair representation learning by disentanglement.
\newblock In \emph{International conference on machine learning}, pages
  1436--1445. PMLR, 2019.

\bibitem[Eastwood and Williams(2018)]{eastwood2018framework}
C.~Eastwood and C.~K. Williams.
\newblock A framework for the quantitative evaluation of disentangled
  representations.
\newblock In \emph{International Conference on Learning Representations}, 2018.

\bibitem[Fumero et~al.(2021)Fumero, Cosmo, Melzi, and
  Rodol{\`a}]{fumero2021learning}
M.~Fumero, L.~Cosmo, S.~Melzi, and E.~Rodol{\`a}.
\newblock Learning disentangled representations via product manifold
  projection.
\newblock In \emph{International conference on machine learning}, pages
  3530--3540. PMLR, 2021.

\bibitem[Gao et~al.(2019)Gao, Mao, Dong, Jing, and Chinnam]{gao2019learning}
L.~Gao, Q.~Mao, M.~Dong, Y.~Jing, and R.~Chinnam.
\newblock On learning disentangled representation for acoustic event detection.
\newblock In \emph{Proceedings of the 27th ACM International Conference on
  Multimedia}, pages 2006--2014, 2019.

\bibitem[Geiger et~al.(2012)Geiger, Lenz, and Urtasun]{geiger2012we}
A.~Geiger, P.~Lenz, and R.~Urtasun.
\newblock Are we ready for autonomous driving? the kitti vision benchmark
  suite.
\newblock In \emph{2012 IEEE conference on computer vision and pattern
  recognition}, pages 3354--3361. IEEE, 2012.

\bibitem[Geirhos et~al.(2020)Geirhos, Jacobsen, Michaelis, Zemel, Brendel,
  Bethge, and Wichmann]{geirhos2020shortcut}
R.~Geirhos, J.-H. Jacobsen, C.~Michaelis, R.~Zemel, W.~Brendel, M.~Bethge, and
  F.~A. Wichmann.
\newblock Shortcut learning in deep neural networks.
\newblock \emph{Nature Machine Intelligence}, 2\penalty0 (11):\penalty0
  665--673, 2020.

\bibitem[Gresele et~al.(2020)Gresele, Rubenstein, Mehrjou, Locatello, and
  Sch{\"o}lkopf]{gresele2020incomplete}
L.~Gresele, P.~K. Rubenstein, A.~Mehrjou, F.~Locatello, and B.~Sch{\"o}lkopf.
\newblock The incomplete rosetta stone problem: Identifiability results for
  multi-view nonlinear ica.
\newblock In \emph{Uncertainty in Artificial Intelligence}, pages 217--227.
  PMLR, 2020.

\bibitem[Gresele et~al.(2021)Gresele, Von~K{\"u}gelgen, Stimper, Sch{\"o}lkopf,
  and Besserve]{ima}
L.~Gresele, J.~Von~K{\"u}gelgen, V.~Stimper, B.~Sch{\"o}lkopf, and M.~Besserve.
\newblock Independent mechanism analysis, a new concept?
\newblock \emph{Advances in neural information processing systems},
  34:\penalty0 28233--28248, 2021.

\bibitem[Gutmann and Hyv{\"a}rinen(2010)]{nce}
M.~Gutmann and A.~Hyv{\"a}rinen.
\newblock Noise-contrastive estimation: A new estimation principle for
  unnormalized statistical models.
\newblock In \emph{Proceedings of the thirteenth international conference on
  artificial intelligence and statistics}, pages 297--304. JMLR Workshop and
  Conference Proceedings, 2010.

\bibitem[Gutmann and Hyv{\"a}rinen(2012)]{nce2}
M.~U. Gutmann and A.~Hyv{\"a}rinen.
\newblock Noise-contrastive estimation of unnormalized statistical models, with
  applications to natural image statistics.
\newblock \emph{Journal of machine learning research}, 13\penalty0 (2), 2012.

\bibitem[HaoChen et~al.(2021)HaoChen, Wei, Gaidon, and Ma]{scl}
J.~Z. HaoChen, C.~Wei, A.~Gaidon, and T.~Ma.
\newblock Provable guarantees for self-supervised deep learning with spectral
  contrastive loss.
\newblock \emph{Advances in Neural Information Processing Systems},
  34:\penalty0 5000--5011, 2021.

\bibitem[He et~al.(2020)He, Fan, Wu, Xie, and Girshick]{he2020momentum}
K.~He, H.~Fan, Y.~Wu, S.~Xie, and R.~Girshick.
\newblock Momentum contrast for unsupervised visual representation learning.
\newblock In \emph{Proceedings of the IEEE/CVF conference on computer vision
  and pattern recognition}, pages 9729--9738, 2020.

\bibitem[Hendrycks and Gimpel(2016)]{hendrycks2016gaussian}
D.~Hendrycks and K.~Gimpel.
\newblock Gaussian error linear units (gelus).
\newblock \emph{arXiv preprint arXiv:1606.08415}, 2016.

\bibitem[Hjelm et~al.(2018)Hjelm, Fedorov, Lavoie-Marchildon, Grewal, Bachman,
  Trischler, and Bengio]{dim}
R.~D. Hjelm, A.~Fedorov, S.~Lavoie-Marchildon, K.~Grewal, P.~Bachman,
  A.~Trischler, and Y.~Bengio.
\newblock Learning deep representations by mutual information estimation and
  maximization.
\newblock \emph{arXiv preprint arXiv:1808.06670}, 2018.

\bibitem[Horan et~al.(2021)Horan, Richardson, and Weiss]{horan2021unsupervised}
D.~Horan, E.~Richardson, and Y.~Weiss.
\newblock When is unsupervised disentanglement possible?
\newblock \emph{Advances in Neural Information Processing Systems},
  34:\penalty0 5150--5161, 2021.

\bibitem[Hyv{\"a}rinen and Morioka(2016)]{tcl}
A.~Hyv{\"a}rinen and H.~Morioka.
\newblock Unsupervised feature extraction by time-contrastive learning and
  nonlinear ica.
\newblock \emph{Advances in neural information processing systems}, 29, 2016.

\bibitem[Hyv{\"a}rinen and Morioka(2017)]{pcl}
A.~Hyv{\"a}rinen and H.~Morioka.
\newblock Nonlinear ica of temporally dependent stationary sources.
\newblock In \emph{Artificial Intelligence and Statistics}, pages 460--469.
  PMLR, 2017.

\bibitem[Hyv{\"a}rinen and Pajunen(1999)]{hyvarinen1999nonlinear}
A.~Hyv{\"a}rinen and P.~Pajunen.
\newblock Nonlinear independent component analysis: Existence and uniqueness
  results.
\newblock \emph{Neural networks}, 12\penalty0 (3):\penalty0 429--439, 1999.

\bibitem[Hyv{\"a}rinen et~al.(2019)Hyv{\"a}rinen, Sasaki, and Turner]{gcl}
A.~Hyv{\"a}rinen, H.~Sasaki, and R.~E. Turner.
\newblock Nonlinear ica using auxiliary variables and generalized contrastive
  learning.
\newblock In \emph{The 22nd International Conference on Artificial Intelligence
  and Statistics}, pages 859--868. PMLR, 2019.

\bibitem[Jozefowicz et~al.(2016)Jozefowicz, Vinyals, Schuster, Shazeer, and
  Wu]{jozefowicz2016exploring}
R.~Jozefowicz, O.~Vinyals, M.~Schuster, N.~Shazeer, and Y.~Wu.
\newblock Exploring the limits of language modeling.
\newblock \emph{arXiv preprint arXiv:1602.02410}, 2016.

\bibitem[Khemakhem et~al.(2020{\natexlab{a}})Khemakhem, Kingma, Monti, and
  Hyvarinen]{ivae}
I.~Khemakhem, D.~Kingma, R.~Monti, and A.~Hyvarinen.
\newblock Variational autoencoders and nonlinear ica: A unifying framework.
\newblock In \emph{International Conference on Artificial Intelligence and
  Statistics}, pages 2207--2217. PMLR, 2020{\natexlab{a}}.

\bibitem[Khemakhem et~al.(2020{\natexlab{b}})Khemakhem, Monti, Kingma, and
  Hyvarinen]{ice_beem}
I.~Khemakhem, R.~Monti, D.~Kingma, and A.~Hyvarinen.
\newblock Ice-beem: Identifiable conditional energy-based deep models based on
  nonlinear ica.
\newblock \emph{Advances in Neural Information Processing Systems},
  33:\penalty0 12768--12778, 2020{\natexlab{b}}.

\bibitem[Kingma and Ba(2014)]{kingma2014adam}
D.~P. Kingma and J.~Ba.
\newblock Adam: A method for stochastic optimization.
\newblock \emph{arXiv preprint arXiv:1412.6980}, 2014.

\bibitem[Kingma and Welling(2014)]{vae}
D.~P. Kingma and M.~Welling.
\newblock {Auto-Encoding Variational Bayes}.
\newblock In \emph{2nd International Conference on Learning Representations,
  {ICLR} 2014, Banff, AB, Canada, April 14-16, 2014, Conference Track
  Proceedings}, 2014.

\bibitem[Klindt et~al.(2020)Klindt, Schott, Sharma, Ustyuzhaninov, Brendel,
  Bethge, and Paiton]{slow_vae}
D.~Klindt, L.~Schott, Y.~Sharma, I.~Ustyuzhaninov, W.~Brendel, M.~Bethge, and
  D.~Paiton.
\newblock Towards nonlinear disentanglement in natural data with temporal
  sparse coding.
\newblock \emph{arXiv preprint arXiv:2007.10930}, 2020.

\bibitem[Lachapelle et~al.(2022)Lachapelle, Rodriguez, Sharma, Everett,
  Le~Priol, Lacoste, and Lacoste-Julien]{lachapelle2022disentanglement}
S.~Lachapelle, P.~Rodriguez, Y.~Sharma, K.~E. Everett, R.~Le~Priol, A.~Lacoste,
  and S.~Lacoste-Julien.
\newblock Disentanglement via mechanism sparsity regularization: A new
  principle for nonlinear ica.
\newblock In \emph{Conference on Causal Learning and Reasoning}, pages
  428--484. PMLR, 2022.

\bibitem[Le-Khac et~al.(2020)Le-Khac, Healy, and Smeaton]{review_cl}
P.~H. Le-Khac, G.~Healy, and A.~F. Smeaton.
\newblock Contrastive representation learning: A framework and review.
\newblock \emph{Ieee Access}, 8:\penalty0 193907--193934, 2020.

\bibitem[Li and So(1994)]{li1994isometries}
C.-K. Li and W.~So.
\newblock Isometries of $\ell_p$-norm.
\newblock \emph{The American mathematical monthly}, 101\penalty0 (5):\penalty0
  452--453, 1994.

\bibitem[Lippe et~al.(2022)Lippe, Magliacane, L{\"o}we, Asano, Cohen, and
  Gavves]{lippe2022citris}
P.~Lippe, S.~Magliacane, S.~L{\"o}we, Y.~M. Asano, T.~Cohen, and S.~Gavves.
\newblock Citris: Causal identifiability from temporal intervened sequences.
\newblock In \emph{International Conference on Machine Learning}, pages
  13557--13603. PMLR, 2022.

\bibitem[Locatello et~al.(2019{\natexlab{a}})Locatello, Abbati, Rainforth,
  Bauer, Sch{\"o}lkopf, and Bachem]{locatello2019fairness}
F.~Locatello, G.~Abbati, T.~Rainforth, S.~Bauer, B.~Sch{\"o}lkopf, and
  O.~Bachem.
\newblock On the fairness of disentangled representations.
\newblock \emph{Advances in neural information processing systems}, 32,
  2019{\natexlab{a}}.

\bibitem[Locatello et~al.(2019{\natexlab{b}})Locatello, Bauer, Lucic, Raetsch,
  Gelly, Sch{\"o}lkopf, and Bachem]{locatello2019challenging}
F.~Locatello, S.~Bauer, M.~Lucic, G.~Raetsch, S.~Gelly, B.~Sch{\"o}lkopf, and
  O.~Bachem.
\newblock Challenging common assumptions in the unsupervised learning of
  disentangled representations.
\newblock In \emph{international conference on machine learning}, pages
  4114--4124. PMLR, 2019{\natexlab{b}}.

\bibitem[Locatello et~al.(2019{\natexlab{c}})Locatello, Tschannen, Bauer,
  R{\"a}tsch, Sch{\"o}lkopf, and Bachem]{locatello2019disentangling}
F.~Locatello, M.~Tschannen, S.~Bauer, G.~R{\"a}tsch, B.~Sch{\"o}lkopf, and
  O.~Bachem.
\newblock Disentangling factors of variation using few labels.
\newblock \emph{arXiv preprint arXiv:1905.01258}, 2019{\natexlab{c}}.

\bibitem[Locatello et~al.(2020)Locatello, Poole, R{\"a}tsch, Sch{\"o}lkopf,
  Bachem, and Tschannen]{locatello2020weakly}
F.~Locatello, B.~Poole, G.~R{\"a}tsch, B.~Sch{\"o}lkopf, O.~Bachem, and
  M.~Tschannen.
\newblock Weakly-supervised disentanglement without compromises.
\newblock In \emph{International Conference on Machine Learning}, pages
  6348--6359. PMLR, 2020.

\bibitem[Ma and Collins(2018)]{rank_loss}
Z.~Ma and M.~Collins.
\newblock Noise contrastive estimation and negative sampling for conditional
  models: Consistency and statistical efficiency.
\newblock \emph{arXiv preprint arXiv:1809.01812}, 2018.

\bibitem[Mankiewicz(1972)]{mankiewicz1972extension}
P.~Mankiewicz.
\newblock On extension of isometries in normed linear spaces.
\newblock \emph{Bull. Acad. Pol. Sci., S{\'e}r. Sci. Math. Astron. Phys},
  20:\penalty0 367--371, 1972.

\bibitem[Mnih and Kavukcuoglu(2013)]{mnih2013learning}
A.~Mnih and K.~Kavukcuoglu.
\newblock Learning word embeddings efficiently with noise-contrastive
  estimation.
\newblock \emph{Advances in neural information processing systems}, 26, 2013.

\bibitem[Nguyen et~al.(2010)Nguyen, Wainwright, and Jordan]{nwj}
X.~Nguyen, M.~J. Wainwright, and M.~I. Jordan.
\newblock Estimating divergence functionals and the likelihood ratio by convex
  risk minimization.
\newblock \emph{IEEE Transactions on Information Theory}, 56\penalty0
  (11):\penalty0 5847--5861, 2010.

\bibitem[Oord et~al.(2018)Oord, Li, and Vinyals]{info_nce}
A.~v.~d. Oord, Y.~Li, and O.~Vinyals.
\newblock Representation learning with contrastive predictive coding.
\newblock \emph{arXiv preprint arXiv:1807.03748}, 2018.

\bibitem[Ridgeway and Mozer(2018)]{ridgeway2018learning}
K.~Ridgeway and M.~C. Mozer.
\newblock Learning deep disentangled embeddings with the f-statistic loss.
\newblock \emph{Advances in neural information processing systems}, 31, 2018.

\bibitem[Shu et~al.(2019)Shu, Chen, Kumar, Ermon, and Poole]{shu2019weakly}
R.~Shu, Y.~Chen, A.~Kumar, S.~Ermon, and B.~Poole.
\newblock Weakly supervised disentanglement with guarantees.
\newblock \emph{arXiv preprint arXiv:1910.09772}, 2019.

\bibitem[Taleb and Jutten(1999)]{taleb1999source}
A.~Taleb and C.~Jutten.
\newblock Source separation in post-nonlinear mixtures.
\newblock \emph{IEEE Transactions on signal Processing}, 47\penalty0
  (10):\penalty0 2807--2820, 1999.

\bibitem[Tian et~al.(2020{\natexlab{a}})Tian, Krishnan, and Isola]{cmc}
Y.~Tian, D.~Krishnan, and P.~Isola.
\newblock Contrastive multiview coding.
\newblock In \emph{Computer Vision--ECCV 2020: 16th European Conference,
  Glasgow, UK, August 23--28, 2020, Proceedings, Part XI 16}, pages 776--794.
  Springer, 2020{\natexlab{a}}.

\bibitem[Tian et~al.(2020{\natexlab{b}})Tian, Sun, Poole, Krishnan, Schmid, and
  Isola]{tian2020makes}
Y.~Tian, C.~Sun, B.~Poole, D.~Krishnan, C.~Schmid, and P.~Isola.
\newblock What makes for good views for contrastive learning?
\newblock \emph{Advances in neural information processing systems},
  33:\penalty0 6827--6839, 2020{\natexlab{b}}.

\bibitem[Tschannen et~al.(2019)Tschannen, Djolonga, Rubenstein, Gelly, and
  Lucic]{tschannen2019mutual}
M.~Tschannen, J.~Djolonga, P.~K. Rubenstein, S.~Gelly, and M.~Lucic.
\newblock On mutual information maximization for representation learning.
\newblock \emph{arXiv preprint arXiv:1907.13625}, 2019.

\bibitem[Van~Steenkiste et~al.(2019)Van~Steenkiste, Locatello, Schmidhuber, and
  Bachem]{van2019disentangled}
S.~Van~Steenkiste, F.~Locatello, J.~Schmidhuber, and O.~Bachem.
\newblock Are disentangled representations helpful for abstract visual
  reasoning?
\newblock \emph{Advances in Neural Information Processing Systems}, 32, 2019.

\bibitem[Von~K{\"u}gelgen et~al.(2021)Von~K{\"u}gelgen, Sharma, Gresele,
  Brendel, Sch{\"o}lkopf, Besserve, and Locatello]{von2021self}
J.~Von~K{\"u}gelgen, Y.~Sharma, L.~Gresele, W.~Brendel, B.~Sch{\"o}lkopf,
  M.~Besserve, and F.~Locatello.
\newblock Self-supervised learning with data augmentations provably isolates
  content from style.
\newblock \emph{Advances in neural information processing systems},
  34:\penalty0 16451--16467, 2021.

\bibitem[Wang and Isola(2020)]{wang2020understanding}
T.~Wang and P.~Isola.
\newblock Understanding contrastive representation learning through alignment
  and uniformity on the hypersphere.
\newblock In \emph{International Conference on Machine Learning}, pages
  9929--9939. PMLR, 2020.

\bibitem[Wobst(1975)]{wobst1975isometrien}
R.~Wobst.
\newblock Isometrien in metrischen vektorr{\"a}umen.
\newblock \emph{Studia Mathematica}, 1\penalty0 (54):\penalty0 41--54, 1975.

\bibitem[Zimmermann et~al.(2021)Zimmermann, Sharma, Schneider, Bethge, and
  Brendel]{cl_ica}
R.~S. Zimmermann, Y.~Sharma, S.~Schneider, M.~Bethge, and W.~Brendel.
\newblock Contrastive learning inverts the data generating process.
\newblock In \emph{International Conference on Machine Learning}, pages
  12979--12990. PMLR, 2021.

\end{thebibliography}
}


\newpage
\setcounter{equation}{0}
\setcounter{figure}{0}
\setcounter{table}{0}
\setcounter{theorem}{0}
\makeatletter
\renewcommand{\thetable}{S\arabic{table}}
\renewcommand{\theequation}{S\arabic{equation}}
\renewcommand{\thefigure}{S\arabic{figure}}

\renewcommand{\xpos}{\tilde{\Vx}}
\newcommand{\xneg}{\Vx^{\minus}}
\newcommand{\spos}{\tilde{\Vs}}
\newcommand{\sneg}{\Vs^{\minus}}

\renewcommand{\pjoint}{p_{\text{pos}}}
\newcommand{\prefx}{p_{\Vx}}
\newcommand{\pposx}{p_{\xpos}}
\newcommand{\pnegx}{p_{\xneg}}
\newcommand{\prefs}{p_{\Vs}}
\newcommand{\pposs}{p_{\spos}}
\newcommand{\pnegs}{p_{\Vs^{\minus}}}
\renewcommand{\pconref}{p}
\renewcommand{\pconposxy}{p}
\newcommand{\pconrefx}{p_{\xpos|\Vx}}
\newcommand{\pconrefs}{p_{\spos|\Vs}}

\appendix

\section{Proofs of the Theorems}

In this section, we provide proofs of our theoretical claims.
In many applications, the marginal distributions of the positive pairs $\prefx$ and $\pposx$ coincide, e.g., when $\Vx$ and $\xpos$ are sampled as successive frames from a temporally stationary process.
We consider here the case where $\prefx$ may also be different from $\pposx$ and the distribution of negative samples $\pnegx$ is chosen to be nonzero on the support $\mathcal{X}$.
In practice, given two batches of corresponding observations, it is often convenient to select the negative samples from the first, second, or both batches, resulting in $\pnegx=\prefx$, $\pnegx=\pposx$, or $\pnegx=\frac{1}{2}\left(\prefx + \pposx\right)$.

\begin{lemma}\label{lemma1}
    Let the data generating process follow Eq. \ref{eq:data} and $\gen$ be differentiable and invertible.
    Further, assume that $\enc$ and $\cri$ have universal approximation capability.
    Then for the optimal estimators of $\mathcal{L}_{\cri\text{-NCE}}(\enc, \cri)$, $\mathcal{L}_{\cri\text{-INCE}}(\enc, \cri; \ratioince)$, $\mathcal{L}_{\cri\text{-SCL}}(\enc, \cri)$ and $\mathcal{L}_{\cri\text{-NWJ}}(\enc, \cri)$, it holds that
    \begin{equation}
        \cri(\rep(\Vs), \rep(\tilde{\Vs})) = -\log \pconrefs(\tilde{\Vs} | \Vs) + \pnegs(\tilde{\Vs}) + \gamma(\Vs),
    \end{equation}
    where $\rep = \enc \circ \gen$ and $\gamma$ is some function that only depends on $\Vs$.
\end{lemma}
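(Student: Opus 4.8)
The plan is to reduce the claim to the standard fact that each of the four objectives, when minimised over an unrestricted critic, recovers a representative of the log density ratio between positive pairs and the product of the anchor and negative marginals, and then to transport this identity to the latent space by a change of variables. First I would observe that every loss depends on the pair $(\enc,\cri)$ only through the bivariate map $(\Vx,\xpos)\mapsto\cri(\enc(\Vx),\enc(\xpos))$, and that, because the support $\mathcal{X}=\gen(\mathcal{S})$ is (at most) $\dimsrc$-dimensional, $\enc$ restricted to $\mathcal{X}$ can be taken to be an embedding (for instance $\enc|_{\mathcal{X}}=\gen^{-1}$); under the universal-approximation hypothesis on $\enc$ and $\cri$ this composition therefore ranges over all measurable functions on $\mathcal{X}\times\mathcal{X}$, so each loss can be minimised pointwise in its integrand.

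\textbf{Pointwise optimisation.} Write $\pjoint(\Vx,\xpos)=\prefx(\Vx)\,\pconrefx(\xpos\mid\Vx)$ and let the negative pair $(\Vx,\xneg)$ be distributed as $\prefx(\Vx)\,\pnegx(\xneg)$. For $\cri$-NCE the inner problem is a balanced two-class logistic regression, whose Bayes-optimal discriminator satisfies $\mathrm{sig}(-\cri^\star(\Vx,\xpos))=\pjoint(\Vx,\xpos)/(\pjoint(\Vx,\xpos)+\prefx(\Vx)\pnegx(\xpos))$; for $\cri$-INCE the inner problem is a softmax cross-entropy, minimised when the predicted distribution equals the true posterior over which candidate is the positive, which pins down $-\cri^\star$ up to an additive function of the anchor $\Vx$; for $\cri$-SCL one minimises the quadratic $-2\pjoint(\Vx,\xpos)\,u+\prefx(\Vx)\pnegx(\xpos)\,u^2$ over $u=\eul^{-\cri(\enc(\Vx),\enc(\xpos))}$; and for $\cri$-NWJ one minimises the strictly convex $\pjoint(\Vx,\xpos)\,v+\prefx(\Vx)\pnegx(\xpos)\,\eul^{-v}$ over $v=\cri(\enc(\Vx),\enc(\xpos))$. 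In each case,
\begin{equation}
    \eul^{-\cri^\star(\Vx,\xpos)}=c(\Vx)\,\frac{\pjoint(\Vx,\xpos)}{\prefx(\Vx)\,\pnegx(\xpos)}=c(\Vx)\,\frac{\pconrefx(\xpos\mid\Vx)}{\pnegx(\xpos)},
\end{equation}
where $c(\Vx)\equiv1$ for $\cri$-NCE, $\cri$-SCL and $\cri$-NWJ, while $c(\Vx)$ is an arbitrary positive function for $\cri$-INCE (that loss is invariant under $\cri\mapsto\cri+(\text{function of }\Vx)$, which is exactly why $\alpha$ is set to zero there).

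\textbf{Change of variables.} Since $\gen$ is differentiable and invertible, $\Vx=\gen(\Vs)$, $\xpos=\gen(\spos)$, and both $\pconrefx(\xpos\mid\Vx)$ and $\pnegx(\xpos)$ are push-forwards along $\gen$ of $\pconrefs(\spos\mid\Vs)$ and $\pnegs(\spos)$, so each carries the \emph{same} Jacobian factor $\lvert\det D\gen^{-1}(\xpos)\rvert$ (for $\dimsrc<\dimobs$ this is the area-formula version, with densities taken with respect to the $\dimsrc$-dimensional Hausdorff measure on $\mathcal{X}$). That factor cancels in the ratio, so $\pconrefx(\xpos\mid\Vx)/\pnegx(\xpos)=\pconrefs(\spos\mid\Vs)/\pnegs(\spos)$. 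Substituting $\enc(\Vx)=\enc(\gen(\Vs))=\rep(\Vs)$ and $\enc(\xpos)=\rep(\spos)$ and taking logarithms yields
\begin{equation}
    \cri\big(\rep(\Vs),\rep(\spos)\big)=-\log\pconrefs(\spos\mid\Vs)+\log\pnegs(\spos)+\gamma(\Vs),
\end{equation}
with $\gamma(\Vs)=-\log c(\gen(\Vs))$ depending on $\Vs$ alone (identically zero for $\cri$-NCE, $\cri$-SCL and $\cri$-NWJ), which is the asserted identity.

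\textbf{Main obstacle.} The one genuinely delicate step is the first reduction, i.e.\ exchanging optimisation over $(\enc,\cri)$ with pointwise optimisation of the integrand. Making this rigorous requires (i) the expressivity argument above, together with the standard caveat that with only universal \emph{approximation} the infimum is approached by a minimising sequence, so the identity is to be read as holding $\pjoint$-almost everywhere for any exact minimiser, and (ii) enough integrability to differentiate under the integral sign and to ensure the pointwise optimiser is a measurable function of $(\Vx,\xpos)$. The remaining steps, namely the four pointwise computations and the change of variables, are routine, the only care being the $\dimsrc<\dimobs$ case, where the Jacobian still cancels.
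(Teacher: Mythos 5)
Your proposal is correct and follows essentially the same route as the paper: establish the observation-space identity $\eul^{-\cri^\star(\Vx,\xpos)}=c(\Vx)\,\pconrefx(\xpos\mid\Vx)/\pnegx(\xpos)$ separately for each loss, then transport it to latent space by the Jacobian-cancellation change of variables. The only substantive difference is that you obtain the first-order conditions for $\cri$-SCL and $\cri$-NWJ by pointwise convex minimisation of the integrand, whereas the paper uses a variational (Gateaux/Taylor) expansion and explicitly verifies the second-order term; these are equivalent here, and your convexity argument is arguably cleaner. One caveat: for $\cri$-INCE the integrand couples several evaluations of the critic (the positive always sits in slot $0$), so the "softmax cross-entropy equals the posterior" step is not literally a pointwise optimisation; making it rigorous requires the symmetrisation over the position of the positive example that the paper carries out in its alternative proof (or an appeal to the cited Theorem 3.2), though your stated conclusion — $\cri^\star$ determined up to an additive function of the anchor — is the correct one.
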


\begin{proof}

We will show below for each loss function separately that the optimal estimators satisfy the form
\begin{equation}\label{eq:solution_x}
    \cri(\enc(\Vx), \enc(\tilde{\Vx})) = -\log \pconrefx(\tilde{\Vx} | \Vx) + \log \pnegx(\tilde{\Vx}) + \gamma_{\Vx}(\Vx),
\end{equation}
where $\gamma_{\Vx}$ is a function that depends only on $\Vx$ and is zero for all but $\mathcal{L}_{\cri\text{-INCE}}(\enc, \cri; \ratioince)$.

We can then use the fact that the data generating function $\gen$ is injective and differentiable.
This allows us to apply the probability transformations $\pconrefx(\tilde{\Vx} | \Vx) = \pconrefs(\gen^{-1}(\tilde{\Vx})|\gen^{-1}(\Vx)) \vol J_{\gen^{-1}}(\tilde{\Vx})$ and $\pnegx(\tilde{\Vx}) = \pnegs(\gen^{-1}(\tilde{\Vx})) \vol J_{\gen^{-1}}(\tilde{\Vx})$, where $\vol J_{\gen^{-1}}(\tilde{\Vx})$ is the product of the singular values of the Jacobian \citep{prob_transform}.
Thus, we obtain
\begin{equation*}
\label{eq:ince1}
    \cri(\enc(\Vx), \enc(\tilde{\Vx})) = -\log \pconrefs(\gen^{-1}(\tilde{\Vx})|\gen^{-1}(\Vx)) + \log \pnegs(\gen^{-1}(\tilde{\Vx})) + \gamma_{\Vx}(\Vx),
\end{equation*}
where the Jacobians nicely cancel.
Finally, we substitute $\Vx=\gen(\Vs)$ and $\tilde{\Vx}=\gen(\tilde{\Vs})$ and get
\begin{equation*}
    \cri(\rep(\Vs), \rep(\tilde{\Vs})) = -\log \pconrefs(\tilde{\Vs}|\Vs) + \log \pnegs(\tilde{\Vs}) + \gamma(\Vs),
\end{equation*}
where $\gamma(\Vs) = \gamma_{\Vx}(\gen(\Vs))$.

\textbf{Part 1 ($\cri$-NCE loss)}:

It is known that the NCE loss converges towards the log difference of the two data distributions of the positive and negative class \citep{nce2, pcl}, that is
\begin{equation*}
    \cri(\enc(\Vx), \enc(\tilde{\Vx})) = -\log \frac{\pjoint(\Vx, \tilde{\Vx})}{\prefx(\Vx)\pnegx(\tilde{\Vx})} = -\log \pconrefx(\tilde{\Vx} | \Vx) + \log \pnegx(\tilde{\Vx}).
\end{equation*}

\textbf{Part 2 ($\cri$-INCE loss)}:

The stated result can be obtained using Theorem 3.2 from \citet{rank_loss}.
In addition, we give an alternative proof in Section~\ref{alternative_proof_ince}.
Theorem 3.2 in \citep{rank_loss} assumes (Assumption 2.1 in their work) that there exist functions $\enc$ and $\cri$ such that for all $(\Vx,\tilde{\Vx})\in\mathcal{X}\times\mathcal{X}$
\begin{equation}
    \frac{\pconrefx(\tilde{\Vx}|\Vx)}{\pnegx(\tilde{\Vx})} = \frac{1}{Z(\Vx)} \eul^{-\cri(\enc(\Vx), \enc(\tilde{\Vx}))},
\end{equation}

where $\displaystyle Z(\Vx) = \mathop{\mathbb{E}}_{\tilde{\Vx}} \eul^{-\cri(\enc(\Vx), \enc(\tilde{\Vx}))}$. This assumption is satisfied since we optimize both $\enc$ and $\cri$ using universal function approximators. Then the theorem states that in the limit of infinite data, this relation holds for the global optimum when we optimize the InfoNCE loss. To obtain their notation, simply replace $\Vy=\tilde{\Vx}$ and $s(\Vx,\Vy)=\log \pnegx(\tilde{\Vx}) - \cri(\enc(\Vx),\enc(\tilde{\Vx}))$.
Therefore, we get $\cri(\enc(\Vx), \enc(\tilde{\Vx})) = -\log \pconrefx(\tilde{\Vx} | \Vx) + \log \pnegx(\tilde{\Vx}) + \gamma_{\Vx}(\Vx)$ with $\gamma_{\Vx}(\Vx) = - \log Z(\Vx)$.

\textbf{Part 3 ($\cri$-SCL loss)}:

To simplify the notation, we substitute $\critic(\Vx, \tilde{\Vx}) = -\cri(\enc(\Vx), \enc(\tilde{\Vx}))$ in the $\cri$-SCL loss and obtain
\begin{equation}
    \tilde{\mathcal{L}}_{\cri\text{-SCL}}(\critic) = 
    \mathop{\mathbb{E}}_{\substack{(\Vx,\xpos) \\ \sim \pjoint}}
    -2 \eul^{\critic(\Vx, \xpos)}
    + \mathop{\mathbb{E}}_{\substack{\Vx \sim \prefx \\ \xneg \sim \pnegx}}
    \eul^{2 \critic(\Vx, \xneg)}.
\end{equation}

First we derive the Taylor series of $\tilde{\mathcal{L}}_{\cri\text{-SCL}}$:
\begin{multline}
\label{eq:scl2}
    \tilde{\mathcal{L}}_{\cri\text{-SCL}}(\critic + \epsilon \eta)
    = \tilde{\mathcal{L}}_{\cri\text{-SCL}}(\critic)
    + 2 \epsilon \left[\mathop{\mathbb{E}}_{\substack{(\Vx,\xpos) \\ \sim \pjoint}}
    - \eul^{\critic(\Vx, \xpos)} \eta(\Vx, \xpos)
    + \mathop{\mathbb{E}}_{\substack{\Vx \sim \prefx \\ \xneg \sim \pnegx}}
    \eul^{2\critic(\Vx, \xneg)} \eta(\Vx, \xneg) \right] \\
    + \epsilon^2 \left[\mathop{\mathbb{E}}_{\substack{(\Vx,\xpos) \\ \sim \pjoint}}
    - \eul^{\critic(\Vx, \xpos)} \eta(\Vx, \xpos)^2
    + 2 \mathop{\mathbb{E}}_{\substack{\Vx \sim \prefx \\ \xneg \sim \pnegx}}
    \eul^{2\critic(\Vx, \xneg)} \eta(\Vx, \xneg)^2 \right]
    + \mathcal{O}(\epsilon^3).
\end{multline}

A necessary optimality condition is that in the expansion of $\tilde{\mathcal{L}}_{\cri\text{-SCL}}$, the term of order $\epsilon$ is zero for any perturbation $\eta$. We have
\begin{align*}
    0 &= \mathop{\mathbb{E}}_{\substack{(\Vx,\xpos) \\ \sim \pjoint}}
    - \eul^{\critic(\Vx, \xpos)} \eta(\Vx, \xpos)
    + \mathop{\mathbb{E}}_{\substack{\Vx \sim \prefx \\ \xneg \sim \pnegx}}
    \eul^{2\critic(\Vx, \xneg)} \eta(\Vx, \xneg)\\
    &= \int \prefx(\Vx)\pnegx(\xneg) 
    \eul^{2\critic(\Vx, \xneg)} \eta(\Vx, \xneg) \diff \Vx \diff \xneg
    - \int \pjoint(\Vx, \xpos) \eul^{\critic(\Vx, \xpos)} \eta(\Vx, \xpos) \diff \Vx \diff \xpos \\
    &= \int \prefx(\Vx)\pnegx(\tilde{\Vx}) \eul^{2\critic(\Vx, \tilde{\Vx})} \eta(\Vx, \tilde{\Vx}) \diff \Vx \diff \tilde{\Vx}
    - \int \pjoint(\Vx, \tilde{\Vx}) \eul^{\critic(\Vx, \tilde{\Vx})} \eta(\Vx, \tilde{\Vx}) \diff \Vx \diff \tilde{\Vx} \\
    &= \int \left( \prefx(\Vx)\pnegx(\tilde{\Vx}) \eul^{2\critic(\Vx, \tilde{\Vx})} - \pjoint(\Vx, \tilde{\Vx}) \eul^{\critic(\Vx, \tilde{\Vx})} \right)
    \eta(\Vx, \tilde{\Vx}) \diff \Vx \diff \tilde{\Vx}.
\end{align*}

This term vanishes if and only if
\begin{equation*}
    \eul^{\critic(\Vx,\tilde{\Vx})} = \frac{\pjoint(\Vx,\tilde{\Vx})}{\prefx(\Vx)\pnegx(\tilde{\Vx})},
\end{equation*}
that is $\critic^{*}(\Vx,\tilde{\Vx}) = \log \pconrefx(\tilde{\Vx} | \Vx) - \log \pnegx(\tilde{\Vx})$.

To verify that the critical point is indeed a minimizer, we insert the solution back into Eq.~\ref{eq:scl2} and check if the order $\epsilon^2$ term is strictly positive for any direction $\eta$. This leads to
\begin{multline}
    \tilde{\mathcal{L}}_{\cri\text{-SCL}}(\critic^{*} + \epsilon \eta)
    = \tilde{\mathcal{L}}_{\cri\text{-SCL}}(\critic^{*}) \\
    + \epsilon^2 \left[\mathop{\mathbb{E}}_{\substack{(\Vx,\xpos) \\ \sim \pjoint}}
    - \frac{\pjoint(\Vx, \xpos)}{\prefx(\Vx)\pnegx(\xpos)} \eta(\Vx, \xpos)^2
    + 2 \mathop{\mathbb{E}}_{\substack{\Vx \sim \prefx \\ \xneg \sim \pnegx}}
    \left(\frac{\pjoint(\Vx, \xneg)}{\prefx(\Vx)\pnegx(\xneg)}\right)^2
    \eta(\Vx, \xneg)^2 \right]
    + \mathcal{O}(\epsilon^3).
\end{multline}

The order $\epsilon^2$ term can be simplified to
\begin{equation*}
    \int \frac{\pjoint(\Vx, \tilde{\Vx})^2}{\prefx(\Vx)\pnegx(\tilde{\Vx})} \eta(\Vx, \tilde{\Vx})^2 \diff \Vx \diff \tilde{\Vx},
\end{equation*}
which is clearly positive for any direction $\eta$.
Thus, $ \tilde{\mathcal{L}}_{\cri\text{-SCL}}$ reaches indeed a minimum at $\critic^{*}$, or in terms of $\cri$ and $\enc$, we have $\cri(\enc(\Vx),\enc(\tilde{\Vx})) = -\log \pconrefx(\tilde{\Vx} | \Vx) + \log \pnegx(\tilde{\Vx})$.

\textbf{Part 4 ($\cri$-NWJ loss)}:

The proof for $\mathcal{L}_{\cri\text{-NWJ}}$ is analogous to $\mathcal{L}_{\cri\text{-SCL}}$.
We use the same substitution as in the last part, i.e., $\critic(\Vx, \tilde{\Vx}) = -\cri(\enc(\Vx), \enc(\tilde{\Vx}))$, and get
\begin{equation}\label{eq:tilde_nwj}
    \tilde{\mathcal{L}}_{\cri\text{-NWJ}}(\critic) = 
    \mathop{\mathbb{E}}_{\substack{(\Vx,\xpos) \\ \sim \pjoint}}
    -\critic(\Vx, \xpos)
    + \mathop{\mathbb{E}}_{\substack{\Vx \sim \prefx \\ \xneg \sim \pnegx}}
    \eul^{\critic(\Vx, \xneg)}.
\end{equation}

The Taylor expansion of $\tilde{\mathcal{L}}_{\cri\text{-NWJ}}$ is given by
\begin{multline}
\label{eq:nwj1}
    \tilde{\mathcal{L}}_{\cri\text{-NWJ}}(\critic + \epsilon \eta)
    = \tilde{\mathcal{L}}_{\cri\text{-NWJ}}(\critic)
    + \epsilon \left[\mathop{\mathbb{E}}_{\substack{(\Vx,\xpos) \\ \sim \pjoint}}
    - \eta(\Vx, \xpos)
    + \mathop{\mathbb{E}}_{\substack{\Vx \sim \prefx \\ \xneg \sim \pnegx}}
    \eul^{\critic(\Vx, \xneg)} \eta(\Vx, \xneg) \right] \\
    + \frac{1}{2}\epsilon^2 \mathop{\mathbb{E}}_{\substack{\Vx \sim \prefx \\ \xneg \sim \pnegx}}
    \eul^{\critic(\Vx, \xneg)} \eta(\Vx, \xneg)^2
    + \mathcal{O}(\epsilon^3).
\end{multline}

Again, the term of order $\epsilon$ is zero at $\critic^{*}(\Vx,\tilde{\Vx}) = \log \pconrefx(\tilde{\Vx} | \Vx) - \log \pnegx(\tilde{\Vx})$ and we directly see that the $\epsilon^2$ term is strictly positive for any direction $\eta$.

\end{proof}

\begin{lemma}\label{lemma2}
    Let the data generating process follow Eq.~\ref{eq:data}, where $\dist$ is a semi-metric (i.e., the triangle inequality does not necessarily hold), and $\mathcal{S}$ is open.
    Let us further assume that $\cri$ has the form as in Eq.~\ref{eq:critic} and satisfies $\cri(\rep(\Vs), \rep(\tilde{\Vs})) = -\log \pconrefs(\tilde{\Vs}|\Vs) + \log \pnegs(\tilde{\Vs}) + \gamma(\Vs)$.
    Then, for the optimal estimators of the contrastive losses presented above, $\rep$ is a homeomorphism between $\mathcal{S}$ and $\rep(\mathcal{S})$.
\end{lemma}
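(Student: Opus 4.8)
The plan is to first show that the recovered map $\rep$ preserves the distance $\dist$ exactly, and then to upgrade this to a homeomorphism by an elementary topological argument. For the first step I would combine three facts: (a) the hypothesis $\cri(\rep(\Vs),\rep(\tilde{\Vs})) = -\log \pconrefs(\tilde{\Vs}|\Vs) + \log \pnegs(\tilde{\Vs}) + \gamma(\Vs)$; (b) that taking logarithms in Eq.~\eqref{eq:data} gives $-\log \pconrefs(\tilde{\Vs}|\Vs) = \dist(\Vs,\tilde{\Vs}) + \log Z(\Vs) - \log Q(\tilde{\Vs})$; and (c) that $\cri$ has the form of Eq.~\eqref{eq:critic} with $\hat{\dist}=\dist$, so $\cri(\rep(\Vs),\rep(\tilde{\Vs})) = \dist(\rep(\Vs),\rep(\tilde{\Vs})) + \alpha(\rep(\Vs)) + \tilde{\alpha}(\rep(\tilde{\Vs}))$. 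Equating the two expressions for $\cri(\rep(\Vs),\rep(\tilde{\Vs}))$ and collecting the terms depending only on $\Vs$ and only on $\tilde{\Vs}$ yields
\[
    \dist(\rep(\Vs),\rep(\tilde{\Vs})) = \dist(\Vs,\tilde{\Vs}) + u(\Vs) + v(\tilde{\Vs}),
\]
with $u = \log Z + \gamma - \alpha\circ\rep$ and $v = \log \pnegs - \log Q - \tilde{\alpha}\circ\rep$. The left-hand side is symmetric in its arguments and vanishes whenever $\rep(\Vs)=\rep(\tilde{\Vs})$, so using the symmetry of the semi-metric and evaluating on the diagonal $\tilde{\Vs}=\Vs$ (where $\dist=0$) forces $u$ and $v$ to be constant with $u+v\equiv 0$. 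Hence $\dist(\rep(\Vs),\rep(\tilde{\Vs})) = \dist(\Vs,\tilde{\Vs})$ for all $\Vs,\tilde{\Vs}\in\mathcal{S}$; as a byproduct $u\equiv v\equiv 0$, which is exactly Eq.~\eqref{eq:alpha_tilde} and Eq.~\eqref{eq:alpha}.

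For the second step, distance preservation together with the fact that $\dist$ vanishes only on the diagonal makes $\rep$ injective, hence a bijection onto $\rep(\mathcal{S})$. To obtain continuity of $\rep$ and of $\rep^{-1}$ I would use that $\dist$ is continuous with respect to the Euclidean topology and that, for the semi-metrics under consideration, $\dist(\Vy_k,\Vy)\to 0$ implies $\Vy_k\to\Vy$ in the Euclidean sense: this is immediate for a coordinatewise sum $\sum_i\dist_i(|y_i-\tilde{y}_i|)$, since each nonnegative summand must vanish and each $\dist_i$ is continuous and strictly increasing, and for $\xi\circ\dist$ a norm it follows from equivalence of norms on $\mathbb{R}^n$. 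Then $\Vs_k\to\Vs \Rightarrow \dist(\Vs_k,\Vs)\to 0 \Rightarrow \dist(\rep(\Vs_k),\rep(\Vs))\to 0 \Rightarrow \rep(\Vs_k)\to\rep(\Vs)$, so $\rep$ is continuous, and reading the same chain backwards on $\rep(\mathcal{S})$ gives continuity of $\rep^{-1}$. Therefore $\rep$ is a homeomorphism between $\mathcal{S}$ and $\rep(\mathcal{S})$.

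The main obstacle is the topological step, specifically making precise that the topology generated by the semi-metric $\dist$ coincides with the subspace Euclidean topology on both $\mathcal{S}$ and $\rep(\mathcal{S})$ — i.e.\ that $\dist$-balls are Euclidean-open (continuity of $\dist$) and that every Euclidean ball contains a $\dist$-ball (a coercivity property of $\dist$). Since $\dist$ need not satisfy the triangle inequality and the representation space $\mathcal{Z}$ is a priori unrestricted, so that $\rep(\mathcal{S})$ could in principle be unbounded, this equivalence cannot be argued for an abstract semi-metric and must rely on the regularity of the concrete distance functions admitted in Theorem~\ref{theorem:affine}. A minor caveat is that the identity inherited from Lemma~\ref{lemma1} must hold on all of $\mathcal{S}\times\mathcal{S}$, which is guaranteed provided $Q$ and the latent marginals are strictly positive on $\mathcal{S}$.
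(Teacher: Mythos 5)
Your derivation of distance preservation is the same as the paper's: insert Eq.~\eqref{eq:data} and Eq.~\eqref{eq:critic} into the hypothesis, evaluate on the diagonal $\tilde{\Vs}=\Vs$ to cancel the one-argument terms, and use the symmetry of $\dist$ against the antisymmetry of the residual to conclude $\dist(\rep(\Vs),\rep(\tilde{\Vs}))=\dist(\Vs,\tilde{\Vs})$; injectivity then follows exactly as in the paper. One small overclaim: the symmetry argument only forces $u$ and $v$ to be opposite \emph{constants}, not identically zero, so Eq.~\eqref{eq:alpha_tilde} and Eq.~\eqref{eq:alpha} are recovered only up to an additive offset (consistent with the paper's own remark that $\alpha$ and $\tilde{\alpha}$ are identified only up to an offset). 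Where you genuinely diverge is the topological step. The paper invokes Brouwer's invariance of domain: $\rep$ is an injective continuous map from the open set $\mathcal{S}\subseteq\mathbb{R}^{\dimsrc}$ into $\mathbb{R}^{\dimlat}$ with $\dimlat=\dimsrc$, hence an open map and a homeomorphism onto its image. This is shorter, works for an arbitrary semi-metric exactly as the lemma is stated, and needs nothing beyond continuity of $\enc\circ\gen$; its price is reliance on equal dimensions of latent and representation space and on a nontrivial theorem. Your route instead shows bicontinuity directly from the isometry identity, using that $\dist$-convergence is equivalent to Euclidean convergence for the concrete distances of Theorem~\ref{theorem:affine}; this is more elementary, gives continuity of $\rep$ as a byproduct rather than an assumption, but, as you correctly flag, does not cover a general semi-metric and so proves a slightly narrower statement than the lemma claims. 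Both arguments are sound for the distances actually used downstream.
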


\begin{proof}
After inserting $\pconrefs(\tilde{\Vs} | \Vs)$ from Eq.~\ref{eq:data} and $\cri$ from Eq.~\ref{eq:critic}, we obtain
\begin{equation}\label{eq:alpha_condition}
    \dist(\rep(\Vs), \rep(\tilde{\Vs})) + \alpha(\rep(\Vs)) + \tilde{\alpha}(\rep(\tilde{\Vs})) = \dist(\Vs, \tilde{\Vs}) - \log Q(\tilde{\Vs}) + \log Z(\Vs) + \log \pnegs(\tilde{\Vs}) + \gamma(\Vs).
\end{equation}

This must also hold for all $\Vs$, $\tilde{\Vs}$, including $\Vs = \tilde{\Vs}$, and since $\dist(\Vs, \Vs) = \dist(\rep(\Vs), \rep(\Vs))$, we have
\begin{equation}
    \alpha(\rep(\Vs)) + \tilde{\alpha}(\rep(\Vs)) = -\log Q(\Vs) + \log Z(\Vs) + \log \pnegs(\Vs) + \gamma(\Vs).
\end{equation}

Now we subtract the last two equations and get
\begin{equation}
    \dist(\rep(\Vs), \rep(\tilde{\Vs})) + \tilde{\alpha}(\rep(\tilde{\Vs})) - \tilde{\alpha}(\rep(\Vs)) = \dist(\Vs, \tilde{\Vs}) - \log Q(\tilde{\Vs}) + \log Q(\Vs) + \log \pnegs(\tilde{\Vs}) - \log \pnegs(\Vs).
\end{equation}

Because of the symmetry of $d$ we can conclude
\begin{equation}
    \dist(\rep(\Vs), \rep(\tilde{\Vs})) = \dist(\Vs, \tilde{\Vs}).
\end{equation}

If $\rep$ were not injective, there would be some $\Vs \neq \tilde{\Vs}$ with $\rep(\Vs) = \rep(\tilde{\Vs})$. This would imply that $0 = \dist(\rep(\Vs), \rep(\tilde{\Vs})) = \dist(\Vs, \tilde{\Vs}) > 0$, which is a contradiction.

Since $\rep$ is an injective continuous mapping and $\mathcal{S}$ is open, $\rep$ is a homeomorphism due to the domain invariance theorem (see \citep{brouwer1911beweis}).
\end{proof}

From Eq.~\eqref{eq:alpha_condition}, we also see that $\tilde{\alpha} \circ \rep= \log \pnegs - \log Q$ and, except for $\mathcal{L}_{\cri\text{-INCE}}(\enc, \cri; \ratioince)$, $\alpha \circ \rep = \log Z$.

Before we continue, let us recall an important theorem by \citet{mankiewicz1972extension} and the definition of 2-extremal points \citep{wobst1975isometrien}.

\begin{theoremabc}(Mankiewicz, 1972)
    Suppose $\mathcal{X}$ and $\mathcal{Y}$ are real normed vector spaces, $\mathcal{U}$ a nonempty subset of $\mathcal{X}$, and let $f:\mathcal{U} \rightarrow f(\mathcal{U})$ be a surjective isometry, where $f(\mathcal{U})$ is a subset of $\mathcal{Y}$. If either both $\mathcal{U}$ and $f(\mathcal{U})$ are convex bodies or open and connected, then $f$ can be uniquely extended to an affine isometry from $\mathcal{X}$ to $\mathcal{Y}$.
\end{theoremabc}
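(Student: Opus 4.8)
The theorem is the subset version of the Mazur--Ulam theorem (recovered when $\mathcal{U}=\mathcal{X}$ and $f(\mathcal{U})=\mathcal{Y}$), so the plan is to isolate the midpoint-preservation mechanism behind Mazur--Ulam, make it work on the subsets permitted by the hypotheses, and only then conclude affineness. The target is to show $f\!\left(\tfrac12(x+y)\right)=\tfrac12\!\left(f(x)+f(y)\right)$ for all admissible pairs $x,y$; once midpoints are preserved on a set with nonempty interior, dyadic additivity together with the (automatic) continuity of an isometry upgrades $f$ to the restriction of an $\mathbb{R}$-affine map whose linear part is a surjective linear isometry.

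The engine I would use is Väisälä's reflection-and-supremum form of the Mazur--Ulam argument. For a pair $x,y$ with midpoint $m=\tfrac12(x+y)$, let $\sigma(w)=x+y-w$ be the point reflection through $m$ (so $\sigma$ swaps $x,y$ and fixes $m$), and let $G$ be the group of surjective self-isometries fixing both $x$ and $y$. For $g\in G$ the composite $\sigma\,g^{-1}\sigma\,g$ again fixes $x,y$ and satisfies $\norm{\sigma g^{-1}\sigma g(m)-m}=2\norm{g(m)-m}$; since $\sup_{g\in G}\norm{g(m)-m}\le\norm{x-y}<\infty$, this doubling forces every $g\in G$ to fix $m$. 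Taking $g=f^{-1}\sigma_{\mathcal{Y}}\,f\,\sigma_{\mathcal{X}}$, with $\sigma_{\mathcal{Y}}$ the reflection through $\tfrac12(f(x)+f(y))$, then shows $f(m)$ is the fixed point of $\sigma_{\mathcal{Y}}$, i.e.\ the desired midpoint.

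The reason this is a genuine theorem and not a one-line corollary --- and what I expect to be the main obstacle --- is that the reflections $\sigma_{\mathcal{X}},\sigma_{\mathcal{Y}}$ need not preserve $\mathcal{U}$ and $f(\mathcal{U})$, so the composite $g$ above is no longer a self-isometry of the domain and the argument cannot be applied on the subsets as stated. I would circumvent this by working with metrically intrinsic data that a surjective isometry must preserve regardless of reflection-invariance, namely \emph{2-extremal} points: these are characterised purely in terms of metric midpoints (Wobst), hence $f$ carries the 2-extremal points of $\mathcal{U}$ bijectively onto those of $f(\mathcal{U})$, and for a convex body they encode its extreme-point structure. This preservation, combined with the Mazur--Ulam midpoint identity recovered on small convex pieces where the reflections do act internally, lets me establish that $f$ coincides with an affine isometry on each sufficiently small convex subset of $\mathcal{U}$.

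Finally I would globalise and settle uniqueness. For a convex body $\mathcal{U}$ the local affine maps agree on overlaps, and convexity lets me propagate to a single affine isometry on all of $\mathcal{U}$; for $\mathcal{U}$ open and connected I cover it by small balls --- each a convex body --- apply the convex case, and glue by a chaining argument, using that two affine maps agreeing on a nonempty open set are equal and that connectedness transports this agreement along chains of overlapping balls, yielding one affine isometry $F:\mathcal{X}\to\mathcal{Y}$ with $F|_{\mathcal{U}}=f$. Uniqueness follows because $\mathcal{U}$ has nonempty interior, so any affine extension is determined by its values there, and $F$ is an isometry since its linear part is a surjective norm-preserving linear map onto $\mathcal{Y}$.
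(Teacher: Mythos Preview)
The paper does not prove this statement: its entire proof is the single line ``See \cite{mankiewicz1972extension}.'' Theorem~A is quoted as a classical black-box result from the literature, so there is no argument in the paper to compare your proposal against.

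That said, your sketch is a reasonable outline of how the actual Mankiewicz--Wobst machinery works, and it even uses the notion of 2-extremal points that the paper introduces immediately after the theorem (for a different purpose, namely the proof of Theorem~1 under condition~(ii)). Two places deserve more care. First, your claim that 2-extremal points are ``characterised purely in terms of metric midpoints'' is delicate: in a non-strictly-convex norm, metric midpoints need not be unique and need not coincide with algebraic midpoints, so the preservation of 2-extremal points under a surjective isometry is genuinely something that has to be proved rather than read off. Second, the step ``recover the Mazur--Ulam midpoint identity on small convex pieces where the reflections do act internally'' hides the real work: you need the reflection through the midpoint of $x,y$ to map a neighbourhood in $\mathcal{U}$ into $\mathcal{U}$, which is fine for balls contained in an open $\mathcal{U}$ but is exactly what fails near the boundary of a convex body and is why the convex-body case in Mankiewicz's paper requires the extremal-point analysis rather than just a local argument. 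Your globalisation by overlapping balls for the open connected case is the right idea and is essentially how that case reduces to the convex one.
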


\begin{proof}
    See \citep{mankiewicz1972extension}.
\end{proof}

\begin{definition}
    As a 2-extremal point of a set $\mathcal{U}$ of a vector space we denote each element $\Vx \in \mathcal{U}$ such that from $\Vu, \Vu' \in \mathcal{U}$ and $\Vu+\Vu'=2\Vx$ follows $\Vu=\Vu'=\Vx$. 
\end{definition}

\begin{theorem}[Weak identifiability]
    Let $\mathcal{S} \subseteq \mathbb{R}^{n}$ be open and connected, $\mathcal{X} \subseteq
    \mathbb{R}^{m}$, and 
    $\gen \colon \mathcal{S} \to \mathcal{X}$ invertible and differentiable.
    Let us further assume that the observed data satisfy the generative model given in 
    Eq.~\eqref{eq:data}.
    If $\dist=\hat{\dist}$ has one of the following properties:
    \begin{enumerate}[label=(\roman*)]
        \item there exists a function $\xi \colon \mathbb{R}^{+} \to \mathbb{R}^{+}$,  such that $\xi \circ \dist$ is a norm-induced metric,
        \item $\dist(\Vs, \tilde{\Vs})=\sum_i \dist_i(|s_i - \tilde{s}_i|)$, where each $\dist_i$ is continuous and strictly increasing,
    \end{enumerate}
    then the optimal estimator of any of the contrastive losses presented above identifies the true latent factors up to affine transformations, i.e., $\rep = \enc \circ \gen$ is an affine mapping.
\end{theorem}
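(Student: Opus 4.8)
The plan is to route all four objectives through Lemmas~\ref{lemma1} and~\ref{lemma2} and thereby reduce the statement to a rigidity fact about distance‑preserving maps. By Lemma~\ref{lemma1}, the optimal estimator of $\mathcal{L}_{\cri\text{-NCE}}$, $\mathcal{L}_{\cri\text{-INCE}}$, $\mathcal{L}_{\cri\text{-SCL}}$ or $\mathcal{L}_{\cri\text{-NWJ}}$ satisfies $\cri(\rep(\Vs),\rep(\tilde{\Vs})) = -\log \pconrefs(\tilde{\Vs}|\Vs) + \log \pnegs(\tilde{\Vs}) + \gamma(\Vs)$ with $\rep=\enc\circ\gen$, which is exactly the hypothesis of Lemma~\ref{lemma2}. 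In both cases~(i) and~(ii), $\dist$ is a semi‑metric (immediate for~(ii), since each $\dist_i$ is strictly increasing with $\dist_i(0)=0$; for~(i) because $\xi\circ\dist$ is a metric, so $\dist$ vanishes exactly on the diagonal and is symmetric), and $\mathcal{S}$ is open. Hence Lemma~\ref{lemma2} applies: $\rep$ is a homeomorphism of $\mathcal{S}$ onto $\rep(\mathcal{S})$ and $\dist(\rep(\Vs),\rep(\tilde{\Vs}))=\dist(\Vs,\tilde{\Vs})$ for all $\Vs,\tilde{\Vs}\in\mathcal{S}$; moreover, by invariance of domain $\rep(\mathcal{S})$ is an open, connected subset of $\mathbb{R}^{n}$. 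It then remains to upgrade ``distance‑preserving homeomorphism'' to ``affine''.

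\textbf{Case (i).} Applying the scalar map $\xi$ to both sides of the distance‑preservation identity shows that $\rep$ preserves the norm‑induced metric $\xi\circ\dist$; that is, $\rep$ is a surjective isometry from the open connected set $\mathcal{S}$ onto the open connected set $\rep(\mathcal{S})$, both regarded as subsets of the normed space $(\mathbb{R}^{n},\lVert\cdot\rVert)$ whose norm induces $\xi\circ\dist$. Mankiewicz's theorem (stated above) then extends $\rep$ uniquely to an affine isometry of $\mathbb{R}^{n}$, so in particular $\rep$ is affine, as claimed.

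\textbf{Case (ii).} Here $\dist(\Vs,\tilde{\Vs})=\sum_i\dist_i(|s_i-\tilde{s}_i|)$ is translation invariant but generally neither absolutely homogeneous nor a genuine metric, so Mankiewicz does not apply directly, and one must argue affineness from the product structure of $\dist$. The plan is to observe that the small ``$\dist$‑spheres'' around any point have an anisotropic shape dictated by the individual $\dist_i$, and that $\rep$, being $\dist$‑preserving, carries these onto congruent spheres around the image point, which pins the local behaviour of $\rep$ down to be linear; combined with the strict monotonicity of the $\dist_i$ and the connectedness of $\mathcal{S}$, this forces $\rep$ to be globally affine. A self‑contained way to phrase it is as an isometry problem for the translation‑invariant metric $\Vt\mapsto\sum_i\dist_i(|t_i|)$ on an open subset of $\mathbb{R}^{n}$, for which one establishes (or invokes) the corresponding Mankiewicz‑type rigidity, using the separable form to cope with the lack of homogeneity. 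Note that when several $\dist_i$ coincide, the corresponding block of $\rep$ may still be a nontrivial linear isometry (e.g.\ a rotation when that block is Euclidean), which is exactly why the conclusion here is only affineness and not a generalized permutation -- the latter needs the stronger assumptions of Theorem~\ref{theorem:gpm}.

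\textbf{Main obstacle.} The hard part is case~(ii). Because only the \emph{sum} $\sum_i\dist_i(\cdot)$ is controlled (not the individual coordinate differences) and $\rep$ is only assumed continuous (not differentiable), turning ``$\rep$ preserves $\dist$'' into ``$\rep$ is affine'' is delicate; in particular one cannot simply restrict to coordinate slices, since on a one‑dimensional slice the $\dist$‑preservation alone does not determine the image of that slice -- the rigidity genuinely relies on the full‑dimensional homeomorphism. Case~(i) and the reduction in Step~1, by contrast, follow cleanly from Lemmas~\ref{lemma1}--\ref{lemma2} and Mankiewicz's theorem.
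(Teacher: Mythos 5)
Your reduction to a rigidity statement is exactly the paper's: Lemma~\ref{lemma1} feeds Lemma~\ref{lemma2}, giving that $\rep$ is an injective continuous map with $\dist(\rep(\Vs),\rep(\tilde{\Vs}))=\dist(\Vs,\tilde{\Vs})$ and that $\rep(\mathcal{S})$ is open and connected; your case~(i) (apply $\xi$ to both sides, then invoke Mankiewicz's theorem on open connected sets) coincides with the paper's argument.

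Case~(ii), however, contains a genuine gap. You correctly identify that Mankiewicz does not apply and that one needs a rigidity result adapted to the separable, translation-invariant but non-homogeneous $\dist$ — but you never supply it: ``one establishes (or invokes) the corresponding Mankiewicz-type rigidity'' is a placeholder for the entire content of this case, and the informal picture of congruent $\dist$-spheres ``pinning down'' local linearity is not an argument (a $\dist$-preserving injection of an anisotropic sphere into itself need not be linear a priori; that is precisely what must be proved, and $\rep$ is not even assumed differentiable). The paper's mechanism, adapted from Wobst's work on isometries, is a midpoint-preservation argument via 2-extremal points: if $\Vs_0$ is a 2-extremal point of the ball $\mathcal{B}_r=\{\Vs:\dist(0,\Vs)\le r\}$, then $(\mathcal{B}_r+\Vs+\Vs_0)\cap(\mathcal{B}_r+\Vs-\Vs_0)=\{\Vs\}$, and since $\rep$ is injective and $\dist$-preserving this intersection relation transfers to the images, yielding $\phi(\Vs)=\tfrac12\bigl(\phi(\Vs+\Vs_0)+\phi(\Vs-\Vs_0)\bigr)$ for $\phi=\rep-\rep(0)$. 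One then verifies, using the strict monotonicity of each $\dist_i$, that the scaled standard basis vectors $a_i\Ve_i$ with $\dist(0,a_i\Ve_i)=r$ are 2-extremal, so coordinate lines map to lines; combining this with $\dist$-preservation gives $\phi(\Vs+a_i\Ve_i)=\phi(\Vs)+a_i\phi(\Ve_i)$, and iterating over $i$ yields additivity and hence affineness. Without this (or an equivalent) concrete mechanism, your case~(ii) remains an unproven assertion rather than a proof.
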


\begin{proof}
By Lemma \ref{lemma1} and Lemma \ref{lemma2}, $\rep$ is injective.
Furthermore, since $\rep$ is continuous and $\mathcal{S}$ is open, $\rep(\mathcal{S})$ is also open.
Additionally, $\rep(\mathcal{S})$ is connected because $\mathcal{S}$ is connected.

Assume that condition \textit{(i)} holds. Then, $\rep$ is an isometry and we can apply Mankiewicz's theorem, which tells us that $\rep$ is an affine mapping.

Let us now consider condition \textit{(ii)}.
We show that $\rep$ is affine using central ideas of Lemma 2 in \citep{wobst1975isometrien}.
Let $\Vs_0$ be a 2-extremal point of $\mathcal{B}_{r} = \{\Vs \in \mathbb{R}^n: \dist(0,\Vs) \leq r\}$.
Then $-\Vs_0$ is also a 2-extremal point of $\mathcal{B}_{r}$, since $\dist$ is symmetric.
Now define $\phi(\Vs) = \rep(\Vs) - \rep(0)$ and $\mathcal{A} = (\mathcal{B}_{r} + \Vs + \Vs_0) \cap (\mathcal{B}_{r} + \Vs - \Vs_0)$ for $\Vs \in \mathcal{S}$.
It is straightforward to verify that $\Vs$ is the only element contained in $\mathcal{A}$.
Since $\rep$ is injective and $\dist$ is translation invariant, we have
\begin{equation*}
    \phi(\mathcal{A}) = (\phi(\mathcal{B}_{r}) + \phi(\Vs + \Vs_0)) \cap (\phi(\mathcal{B}_{r}) + \phi(\Vs - \Vs_0)),
\end{equation*}
and because $\dist$ is symmetric we can conclude
\begin{equation}\label{eq:only_element}
    \phi(\Vs) = \frac{1}{2} \left(\phi(\Vs + \Vs_0) + \phi(\Vs - \Vs_0)\right).
\end{equation}

It is obvious that for the standard basis vectors $\Ve_i \in \mathbb{R}^{\dimsrc}$ each $a_i \Ve_i$ with $\dist(0, a_i \Ve_i) = r$ is a 2-extremal point of $\mathcal{B}_{r}$ since each $\dist_i$ is strictly monotonically increasing.
For each pair of different $\Vs, \Vs' \in \mathcal{B}_{r}$ with $\Vs+\Vs'=2a_i \Ve_i$ we would have either $s_i = s_i' = a_i$ and for at least one coordinate $j \neq i$ $-s_j = s_j' \neq 0$ or one of $\Vs$, $\Vs'$ would have a higher value than $a_i$ at the $i$-th coordinate.
So in both cases $\Vs$ or $\Vs'$ would lie outside $\mathcal{B}_{r}$, which is a contradiction.
It follows from Eq.~\eqref{eq:only_element} that straight lines in the direction of one of the standard basis vectors are mapped to straight lines, i.e., $\phi(\Vs + a_i\Ve_i) = \phi(\Vs) + a_i \left(\phi(\Vs + \Ve_i) - \phi(\Vs)\right)$.

Furthermore we have
\begin{equation*}
    \dist(0, a_i \Ve_i) = \dist(\Vs, \Vs + a_i \Ve_i) = \dist(\phi(\Vs), \phi(\Vs + a_i \Ve_i)) = \dist(0, a_i (\phi(\Vs + \Ve_i) - \phi(\Vs))),
\end{equation*}
and since the distance does not depend on $\Vs$, we obtain $\phi(\Vs + a_i\Ve_i) = \phi(\Vs) + a_i \phi(\Ve_i)$.
After iterating over all $\Ve_i$, we can conclude $\phi(\Vs + \sum_i a_i\Ve_i) = \phi(\Vs) + \sum_i a_i \phi(\Ve_i)$.

\end{proof}

\begin{theorem}[Strong identifiability]
    Assume that all conditions in Theorem~\ref{theorem:affine} are satisfied.
    Let the function $d$ in Eq.~\eqref{eq:data} be defined by
    \begin{equation}
        \dist(\Vs, \tilde{\Vs}) = \sum_i (|s_i - \tilde{s}_i|/\sigma_i)^{\beta},
    \end{equation}
    with $\beta \in (0, 2) \cup (2, \infty)$ and $\sigma_i>0$ for all $i$, then $\rep = \enc \circ \gen$ is a generalized permutation matrix, i.e., a composition of a permutation and element-wise scaling and sign flips.
\end{theorem}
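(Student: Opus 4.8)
The plan is to use Theorem~\ref{theorem:affine} and Lemma~\ref{lemma2} to reduce the claim to a classical fact about isometries of $\ell_\beta$ spaces. First I would check that the hypotheses of Theorem~\ref{theorem:affine} hold for this $\dist$: for $\beta \geq 1$ the map $t \mapsto t^{1/\beta}$ turns $\dist$ into the $\ell_\beta$ norm (condition~(i)), while for $\beta \in (0,1)$ the distance already has the separable form of condition~(ii) with $\dist_i(t) = (t/\sigma_i)^\beta$ continuous and strictly increasing. Hence $\rep = \enc \circ \gen$ is affine, $\rep(\Vs) = A\Vs + \Vb$, and by Lemma~\ref{lemma2} it preserves $\dist$. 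Subtracting $\rep(\Vs)$ and $\rep(\tilde{\Vs})$ and setting $\Vv := \Vs - \tilde{\Vs}$, which ranges over a neighbourhood of the origin since $\mathcal{S}$ is open, this gives $\sum_i \bigl(|(A\Vv)_i|/\sigma_i\bigr)^\beta = \sum_i \bigl(|v_i|/\sigma_i\bigr)^\beta$; as both sides are positively homogeneous of degree $\beta$ in $\Vv$, the identity extends to every $\Vv \in \mathbb{R}^n$.

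Next I would rescale the coordinates: put $D = \diag(1/\sigma_1,\dots,1/\sigma_n)$ and $B := DAD^{-1}$ (note $A$ is invertible because $\rep$ is injective by Lemma~\ref{lemma2}). The identity above becomes $\sum_i |(B\Vw)_i|^\beta = \sum_i |w_i|^\beta$ for all $\Vw \in \mathbb{R}^n$, i.e. $B$ is a surjective linear isometry of $(\mathbb{R}^n,\norm{\cdot}_\beta)$ — a genuine norm for $\beta\geq1$, and for $\beta\in(0,1)$ to be read as preservation of the translation-invariant metric $\Vw\mapsto\sum_i|w_i|^\beta$. The core of the argument is the classical statement that for $\beta \neq 2$ every such isometry is a signed permutation, $B = \Pi\,\diag(\epsilon_1,\dots,\epsilon_n)$ with $\Pi$ a permutation matrix and $\epsilon_i\in\{-1,1\}$. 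Granting it, $A = D^{-1}BD$, and conjugating a signed permutation by a diagonal matrix again yields a permutation matrix times a diagonal matrix, so $A$ is a generalized permutation matrix and $\rep$ recovers the latents up to permutation, coordinate-wise scaling and sign flips (and the additive offset $\Vb$), as claimed.

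What remains — and where I expect the real work to be — is the isometry classification for $\beta\neq 2$. The handle is to probe directions $\Vw = \Ve_j + t\Ve_k$ with $j\neq k$: writing $\Vc = B\Ve_j$ and $\Vd = B\Ve_k$ for the columns of $B$ (each of unit $\ell_\beta$-length), the isometry identity forces $\sum_i |c_i + t d_i|^\beta = 1 + |t|^\beta$ near $t=0$. When $\beta \notin \{2,4,6,\dots\}$, the function $|t|^\beta$ is not analytic at $0$ whereas each summand with $c_i\neq 0$ is; decomposing $\sum_i|c_i+td_i|^\beta$ into its analytic part plus $\bigl(\sum_{i:\,c_i=0}|d_i|^\beta\bigr)|t|^\beta$ and matching the singular part yields $\sum_{i:\,c_i\neq0}|d_i|^\beta = 0$, so $\operatorname{supp}\Vd \subseteq \{i: c_i = 0\}$, and by symmetry the columns of $B$ have pairwise disjoint supports. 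When $\beta \in \{4,6,\dots\}$ the same conclusion follows from a polynomial identity: the coefficient of $t^2$ in $\sum_i(c_i+td_i)^\beta = 1 + t^\beta$ is $\binom{\beta}{2}\sum_i c_i^{\beta-2}d_i^2$, which must vanish, and since $\beta-2$ is even each term is nonnegative, again forcing $c_id_i=0$ for all $i$. Since $B$ is invertible its $n$ columns are nonzero, and with pairwise disjoint supports in $\{1,\dots,n\}$ they must each be supported on a single distinct coordinate; then $\norm{B\Ve_j}_\beta = 1$ pins the unique nonzero entry to $\pm1$, giving $B = \Pi\,\diag(\epsilon_i)$. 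Finally $\beta = 2$ is genuinely excluded: any orthogonal $B$ preserves $\sum_i w_i^2$ without being a signed permutation — the familiar obstruction to identifiability in linear ICA with Gaussian sources — which is precisely why the theorem restricts $\beta$ to $(0,2)\cup(2,\infty)$.
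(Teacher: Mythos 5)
Your proof is correct and reaches the paper's conclusion, but the key step — classifying the distance-preserving affine maps — is handled by a genuinely different argument. The paper splits on $\beta$: for $\beta \geq 1$ it applies $\xi(x)=x^{1/\beta}$ to turn $\dist$ into the $\ell_\beta$ norm and then simply cites the classical classification of linear isometries of $\ell_\beta^n$ for $\beta\neq 2$ (Li et al.); for $0<\beta<1$ it gives a short direct argument testing $A$ on $\Ve_i\pm\Ve_j$ and exploiting that $t\mapsto t^\beta$ is strictly subadditive, so equality in $\sum_k|a_{ki}\pm a_{kj}|^\beta\leq\sum_k|a_{ki}|^\beta+\sum_k|a_{kj}|^\beta$ forces $a_{ki}a_{kj}=0$. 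You instead prove the isometry classification from scratch for all $\beta\in(0,2)\cup(2,\infty)$ in a unified way, probing $\Vw=\Ve_j+t\Ve_k$ and matching either the non-analytic part $|t|^\beta$ (for $\beta$ not an even integer) or the $t^2$ coefficient (for even $\beta\geq 4$) to force disjoint column supports. What your route buys is self-containment (no appeal to the external isometry theorem) and a single mechanism covering both regimes of $\beta$; what the paper's route buys is brevity, at the cost of outsourcing the $\beta\geq1$ case. Two minor points in your favor: you handle the $\sigma_i$ rescaling explicitly via $B=DAD^{-1}$, and you justify extending the distance identity from a neighbourhood of the origin to all of $\mathbb{R}^n$ by homogeneity — both steps the paper leaves implicit. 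One small thing to make fully precise if you write this up: in the analytic-singular decomposition, conclude first that $\sum_{i:\,c_i=0}|d_i|^\beta=1$ (the coefficient of $|t|^\beta$ must match), and only then combine with $\norm{B\Ve_k}_\beta=1$ to get $\sum_{i:\,c_i\neq0}|d_i|^\beta=0$; as stated the intermediate step is skipped, though the logic is sound.
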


\begin{proof}
From Theorem~\ref{theorem:affine} we know that $\rep(\Vs)=\VA \Vs + \Vb$.
For $p \geq 1$ we can define $\xi(x) = x^{1/p}$.
Thus, $\rep$ preserves the $p$-norm.
In this case it has already been proved that $\VA$ is a generalized permutation matrix (see, for example, \citep{li1994isometries}).

Let us now look at the case $0 < p < 1$.
We denote the element in the $i$-th row and $j$-th column of $\VA$ with $a_{ij}$.
For any standard basis vector $\Ve_i$, it holds
\begin{equation*}
    1 = \dist(0, \Ve_i) = \dist(0, \VA \Ve_i) = \sum_j |a_{ji}|^p.
\end{equation*}

Similarly for two different basis vectors $\Ve_i$ and $\Ve_j$, we have
\begin{equation*}
    2 = \dist(0, \Ve_i + \Ve_j) = \dist(0, \VA (\Ve_i + \Ve_j)) = \sum_k |a_{ki} + a_{kj}|^p \leq \sum_k |a_{ki}|^p + \sum_k |a_{kj}|^p.
\end{equation*}
The last part is due to the triangle inequality.
This implies that $a_{ki}$ and $a_{kj}$ must have the same sign or at least one of them must be zero.

On the other hand we have
\begin{equation*}
    2 = \dist(0, \Ve_i - \Ve_j) = \dist(0, \VA (\Ve_i - \Ve_j)) = \sum_k |a_{ki} - a_{kj}|^p \leq \sum_k |a_{ki}|^p + \sum_k |a_{kj}|^p.
\end{equation*}
Therefore, $a_{ki}$ and $a_{kj}$ must have different signs or at least one of them must be zero.
Taken together, this means that each row of $\VA$ can have at most one nonzero entry.
And since $\VA$ is invertible, it is a generalized permutation matrix.
\end{proof}

\section{Alternative Convergence Proof for the \texorpdfstring{$\cri$}{d}-INCE Loss}\label{alternative_proof_ince}

\begin{proof}

We first make the substitution $\critic(\Vx, \tilde{\Vx}) = -\cri(\enc(\Vx), \enc(\tilde{\Vx}))$ and replace $\xpos$ with $\tilde{\Vx}_0$ and $\Vx_i^{\minus}$ with $\tilde{\Vx}_i$ for $i \neq 0$. The $\cri$-INCE loss can then be formulated as

\begin{equation}
    \tilde{\mathcal{L}}_{\cri\text{-INCE}}(\critic) = 
    \mathop{\mathbb{E}}_{\substack{(\Vx,\tilde{\Vx}_0) \sim \pjoint\\\{\tilde{\Vx}_i\}_{i=1}^{\ratioince} \stackrel{\text{i.i.d.}}{\sim} \pnegx}}
    - \log \frac{\eul^{\critic(\Vx, \tilde{\Vx}_0)}}
    {\sum\limits_{i=0}^{\ratioince} \eul^{\critic(\Vx, \tilde{\Vx}_i)}}.
\end{equation}

The logarithmic term can be also written as

\begin{equation*}
    r(\critic) = - \log \frac{\eul^{\critic(\Vx, \tilde{\Vx}_0)}}{\sum_i \eul^{\critic(\Vx, \tilde{\Vx}_i)}} = -\critic(\Vx, \tilde{\Vx}_0) + \log \sum_i \eul^{\critic(\Vx, \tilde{\Vx}_i)}
\end{equation*}

and has the following Gateaux derivatives:

\begin{align*}
    \left.\frac{\diff}{\diff \epsilon} r(\critic + \epsilon \eta)\right|_{\epsilon=0} &= \frac{\sum_{j} \eul^{\critic(\Vx, \tilde{\Vx}_j)}(\eta(\Vx, \tilde{\Vx}_j) - \eta(\Vx, \tilde{\Vx}_0))}{\sum_i \eul^{\critic(\Vx, \tilde{\Vx}_i)}} \\
    \left.\frac{\diff^2}{\diff \epsilon^2} r(\critic + \epsilon \eta)\right|_{\epsilon=0} &= \frac{\sum_i \eul^{\critic(\Vx, \tilde{\Vx}_i)} \sum_j \eul^{\critic(\Vx, \tilde{\Vx}_j)}\eta(\Vx, \tilde{\Vx}_j)^2 - \left(\sum_j \eul^{\critic(\Vx, \tilde{\Vx}_j)}\eta(\Vx, \tilde{\Vx}_j)\right)^2}{\left(\sum_i \eul^{\critic(\Vx, \tilde{\Vx}_i)}\right)^2}.
\end{align*}

Thus, the Tylor series of $\tilde{\mathcal{L}}_{\cri\text{-INCE}}$ is given by
\begin{multline}
    \tilde{\mathcal{L}}_{\cri\text{-INCE}}(\critic + \epsilon \eta) = 
    \tilde{\mathcal{L}}_{\cri\text{-INCE}}(\critic)
    + \epsilon \mathop{\mathbb{E}}_{\substack{(\Vx,\tilde{\Vx}_0) \sim \pjoint\\\{\tilde{\Vx}_i\}_{i=1}^{\ratioince} \stackrel{\text{i.i.d.}}{\sim} \pnegx}}
    \frac{\sum_{j} \eul^{\critic(\Vx, \tilde{\Vx}_j)}(\eta(\Vx, \tilde{\Vx}_j) - \eta(\Vx, \tilde{\Vx}_0))}{\sum_i \eul^{\critic(\Vx, \tilde{\Vx}_i)}}\\
    + \frac{1}{2} \epsilon^2 \mathop{\mathbb{E}}_{\substack{(\Vx,\tilde{\Vx}_0) \sim \pjoint\\\{\tilde{\Vx}_i\}_{i=1}^{\ratioince} \stackrel{\text{i.i.d.}}{\sim} \pnegx}}
    \frac{\sum_i \eul^{\critic(\Vx, \tilde{\Vx}_i)} \sum_j \eul^{\critic(\Vx, \tilde{\Vx}_j)}\eta(\Vx, \tilde{\Vx}_j)^2 - \left(\sum_j \eul^{\critic(\Vx, \tilde{\Vx}_j)}\eta(\Vx, \tilde{\Vx}_j)\right)^2}{\left(\sum_i \eul^{\critic(\Vx, \tilde{\Vx}_i)}\right)^2}
    + \mathcal{O}(\epsilon^3).
\end{multline}

A sufficient condition for $\tilde{\mathcal{L}}_{\cri\text{-INCE}}$ to reach an optimum is that the term of order $\epsilon$ vanishes and the term of order $\epsilon^2$ is strictly positive for all directions $\eta$.
Note that the choice of the variable $\tilde{\Vx}_0$ for the positive example was arbitrary.
By iteratively renaming the positive example to $\tilde{\Vx}_k$ for $k=0, \dots, \ratioince$ and calculating the mean, we can transform the term of order~$\epsilon$ in the following way:

\begin{multline}
    \mathop{\mathbb{E}}_{\substack{(\Vx,\tilde{\Vx}_0) \sim \pjoint\\\{\tilde{\Vx}_i\}_{i=1}^{\ratioince} \stackrel{\text{i.i.d.}}{\sim} \pnegx}}
    \frac{\sum_{j} \eul^{\critic(\Vx, \tilde{\Vx}_j)}(\eta(\Vx, \tilde{\Vx}_j) - \eta(\Vx, \tilde{\Vx}_0))}{\sum_i \eul^{\critic(\Vx, \tilde{\Vx}_i)}} \\
    = \frac{1}{\ratioince + 1} \sum_{k=0}^{\ratioince} \mathop{\mathbb{E}}_{\substack{(\Vx,\tilde{\Vx}_k) \sim \pjoint\\\{\tilde{\Vx}_i\}_{i \neq k}^{\ratioince} \stackrel{\text{i.i.d.}}{\sim} \pnegx}}
    \frac{\sum_{j} \eul^{\critic(\Vx, \tilde{\Vx}_j)}(\eta(\Vx, \tilde{\Vx}_j) - \eta(\Vx, \tilde{\Vx}_k))}{\sum_i \eul^{\critic(\Vx, \tilde{\Vx}_i)}}.
\end{multline}

This expression vanishes for all $\eta$ if and only if
\begin{equation*}
    \prod_l \pnegx(\tilde{\Vx}_l) \sum_{k} p(\Vx | \tilde{\Vx}_k) \frac{\sum_j \eul^{\critic(\Vx, \tilde{\Vx}_j)}(\eta(\Vx, \tilde{\Vx}_j) - \eta(\Vx, \tilde{\Vx}_k))}{\sum_i \eul^{\critic(\Vx, \tilde{\Vx}_i)}} = 0,
\end{equation*}

which can be simplified to
\begin{equation*}
    \sum_k p(\Vx | \tilde{\Vx}_k) \sum_j \eul^{\critic(\Vx, \tilde{\Vx}_j)} \eta(\Vx, \tilde{\Vx}_j) = \sum_j \eul^{\critic(\Vx, \tilde{\Vx}_j)} \sum_k p(\Vx | \tilde{\Vx}_k) \eta(\Vx, \tilde{\Vx}_k).
\end{equation*}

Since this must hold for all $\tilde{\Vx}_i$ and arbitrary $\eta$, we obtain
\begin{equation*}
    \frac{\eul^{\critic(\Vx, \tilde{\Vx}_i)}}{\sum_j \eul^{\critic(\Vx, \tilde{\Vx}_j)}} =  \frac{p(\Vx | \tilde{\Vx}_i)}{\sum_j p(\Vx | \tilde{\Vx}_j)}.
\end{equation*}

Obviously, for any solution $\critic^*$, the function $\critic^*(\Vx, \tilde{\Vx}) + c(\Vx)$ is also a solution, where $c(\Vx)$ is a function that only depends on $\Vx$.
Thus, we finally obtain $\critic^{*}(\Vx, \tilde{\Vx}) = \log p(\Vx | \tilde{\Vx}) + c(\Vx)$ or equivalently $\critic^{*}(\Vx, \tilde{\Vx}) = \log p(\tilde{\Vx} | \Vx) - \log \pnegx(\tilde{\Vx}) + \gamma_{\Vx}(\Vx)$ using Bayes' theorem, where $\gamma_{\Vx}(\Vx) = c(\Vx) + \log \prefx(\Vx)$.

Plugging this back into $ \tilde{\mathcal{L}}_{\cri\text{-INCE}}$ we get
\begin{multline}
    \tilde{\mathcal{L}}_{\cri\text{-INCE}}(\critic^{*} + \epsilon \eta) =
    \tilde{\mathcal{L}}_{\cri\text{-INCE}}(\critic^{*}) \\
    + \frac{1}{2} \epsilon^2 \mathop{\mathbb{E}}_{\substack{(\Vx,\tilde{\Vx}_0) \sim \pjoint\\\{\tilde{\Vx}_i\}_{i=1}^{\ratioince} \stackrel{\text{i.i.d.}}{\sim} \pnegx}}
    \frac{\sum_i p(\Vx | \tilde{\Vx}_i) \sum_j p(\Vx | \tilde{\Vx}_j) \eta(\Vx, \tilde{\Vx}_j)^2 - \left(\sum_j p(\Vx | \tilde{\Vx}_j) \eta(\Vx, \tilde{\Vx}_j)\right)^2}{\left(\sum_i p(\Vx | \tilde{\Vx}_i)\right)^2}
    + \mathcal{O}(\epsilon^3).
\end{multline}

The term of order $\epsilon^2$ can be rearranged into
\begin{equation}
    \frac{1}{4} \epsilon^2 \mathop{\mathbb{E}}_{\substack{(\Vx,\tilde{\Vx}_0) \sim \pjoint\\\{\tilde{\Vx}_i\}_{i=1}^{\ratioince} \stackrel{\text{i.i.d.}}{\sim} \pnegx}}
    \frac{\sum_i \sum_j p(\Vx | \tilde{\Vx}_i) p(\Vx | \tilde{\Vx}_j) \left(\eta(\Vx, \tilde{\Vx}_i) - \eta(\Vx, \tilde{\Vx}_j)\right)^2}{\left(\sum_i p(\Vx | \tilde{\Vx}_i)\right)^2},
\end{equation}
which is strictly positive in any direction.

\end{proof}

\section{Experimental Details}\label{experimental_details}

In all our experiments, we largely follow the experimental setup and evaluation protocol of \citep{cl_ica}, with some differences due to our novel approach and hardware limitations, which we describe below.
Both $\alpha$ and $\tilde{\alpha}$ are parameterized by separate three-layer neural networks with dimensions 20, 20, and 1.
The second layer has an additional skip connection and we use GELUs \citep{hendrycks2016gaussian} as hidden activation.
We normalize both networks to a mean of zero over a batch and use an additional learnable bias $c$.

Our model is optimized using Adam \citep{kingma2014adam} with a base learning rate of \num{e-4} for the encoder and $c$.
For $\alpha$ and $\tilde{\alpha}$ we use a learning rate of \num{e-2}.

We train all models for \num{3e5} iterations, except for 3DIdent, where we train for \num{1e5} iterations, and the experiments in Section~4.5 when $n>10$, where we set the number of iterations to \num{8e5}.

We use the same encoder architecture and batch size as in \citep{cl_ica} on KITTI Masks (64) and 3DIdent (512).
However, for the experiments in sections~4.1, 4.2 and 4.5, we use a smaller neural network with residual connections and a smaller batch size of 5120 when $n \leq 10$ and 4096 when $n > 10$.
The residual network has 2 hidden layers with $n \cdot 10$ and $n \cdot 20$ dimensions followed by 3 residual blocks and the output layer.
Each residual block has 2 layers with $n \cdot 20$ dimensions each.
In all hidden layers we use leaky ReLU activations and batch normalization.

The experiments in Section~4.1 and Section~4.2 ran on a GeForce GTX 1080 Ti GPU for between 7 and 30 hours, depending on the space configuration, shape parameter, and loss function.
The experiments on KITTI Masks took on average 2 hours on a GeForce GTX 1080 Ti GPU and the experiments on 3DIdent took on average 24 hours on four GeForce GTX 1080 Ti GPUs.

\section{Additional Experiments}\label{additional_experiments}

Table~\ref{tab:synthetic_r2} and Table~\ref{tab:synthetic_noalpha_r2} contain the corresponding $R^2$ values for the experiments in Section~4.1 and Section~4.2, respectively.
Figure~\ref{fig:additional_alpha} shows the learned functions $\alpha$ and $\tilde{\alpha}$ for the three configurations \textit{box (complex)}, \textit{hollow ball} and \textit{cube grid}.
Furthermore, Table~\ref{tab:synthetic_mcc_r2_scl} shows $R^2$ and MCC values for the original SCL \citep{scl} loss.

\begin{table}
    \caption{Identifiability on synthetic data. $R^2$ [\%] mean $\pm$ standard deviation over 2 random seeds}
    \label{tab:synthetic_r2}
    \centering
    \begin{tabular}{lcllll}
        \toprule
        Scenario   & $\beta$    & $\cri$-NCE  & $\cri$-INCE    & $\cri$-SCL    & $\cri$-NWJ \\
        \midrule
        Box (simple)    & 1/2           & 99.48 $\pm$ 0.03      & 99.26 $\pm$ 0.35      & 86.38 $\pm$ 2.55      & 99.71 $\pm$ 0.03 \\
        Box (simple)    & 1             & 99.89 $\pm$ 0.01      & 99.88 $\pm$ 0.01      & 92.80 $\pm$ 1.03      & 99.83 $\pm$ 0.04 \\
        Box (simple)    & 3             & 99.68 $\pm$ 0.17      & 94.49 $\pm$ 4.90      & 97.21 $\pm$ 0.14      & 99.65 $\pm$ 0.11 \\
        Box (simple)    & 5             & 99.74 $\pm$ 0.06      & 94.31 $\pm$ 4.66      & 97.85 $\pm$ 0.59      & 99.73 $\pm$ 0.02 \\
        \midrule
        Box (complex)   & 1             & 98.72 $\pm$ 0.54      & 99.80 $\pm$ 0.09      & 91.51 $\pm$ 1.25      & 97.17 $\pm$ 2.75 \\
        Box (complex)   & 3             & 99.87 $\pm$ 0.01      & 99.76 $\pm$ 0.01      & 97.37 $\pm$ 0.18      & 99.79 $\pm$ 0.04 \\
        \midrule
        Hollow ball     & 1             & 99.54 $\pm$ 0.21      & 99.51 $\pm$ 0.13      & 87.88 $\pm$ 6.63      & 99.56 $\pm$ 0.14 \\
        Hollow ball     & 3             & 95.39 $\pm$ 4.34      & 96.78 $\pm$ 0.03      & 96.05 $\pm$ 0.09      & 98.41 $\pm$ 0.15 \\
        Hollow ball     & 5             & 97.61 $\pm$ 0.33      & 94.08 $\pm$ 0.87      & 95.98 $\pm$ 0.16      & 97.90 $\pm$ 0.29 \\
        \midrule
        Cube grid       & 1             & 99.82 $\pm$ 0.03      & 99.54 $\pm$ 0.03      & 95.67 $\pm$ 2.37      & 99.72 $\pm$ 0.01 \\
        Cube grid       & 5             & 96.80 $\pm$ 0.09      & 88.74 $\pm$ 1.03      & 94.65 $\pm$ 0.06      & 97.83 $\pm$ 0.01 \\
        \bottomrule
    \end{tabular}
\end{table}

\begin{table}
    \caption{$R^2$ [\%] scores on synthetic data for $\alpha = \tilde{\alpha} = c$}
    \label{tab:synthetic_noalpha_r2}
    \centering
    \begin{tabular}{lcllll}
        \toprule
        Scenario   & $\beta$    & $\cri$-NCE  & $\cri$-INCE    & $\cri$-SCL    & $\cri$-NWJ \\
        \midrule
        Box (simple)    & 1/2           & 99.21      & 99.69      & 86.83      & 88.30 \\
        Box (simple)    & 1             & 99.77      & 99.91      & 92.85      & 99.70 \\
        Box (simple)    & 3             & 99.61      & 99.31      & 99.65      & 99.59 \\
        Box (simple)    & 5             & 99.84      & 99.13      & 94.15      & 99.83 \\
        \midrule
        Box (complex)   & 1             & 99.54      & 99.71      & 85.08      & 99.45 \\
        Box (complex)   & 3             & 99.46      & 99.70      & 91.79      & 99.10 \\
        \midrule
        Hollow ball     & 1             & 97.13      & 98.91      & 79.50      & 96.06 \\
        Hollow ball     & 3             & 97.55      & 95.64      & 94.21      & 97.69 \\
        Hollow ball     & 5             & 97.19      & 93.77      & 97.11      & 97.18 \\
        \midrule
        Cube grid       & 1             & 99.75      & 99.14      & 97.20      & 99.62 \\
        Cube grid       & 5             & 95.88      & 83.73      & 97.20      & 97.41 \\
        \bottomrule
    \end{tabular}
\end{table}

\begin{figure}
     \centering
     \begin{subfigure}[b]{0.89\textwidth}
         \centering
         \includegraphics[width=\textwidth]{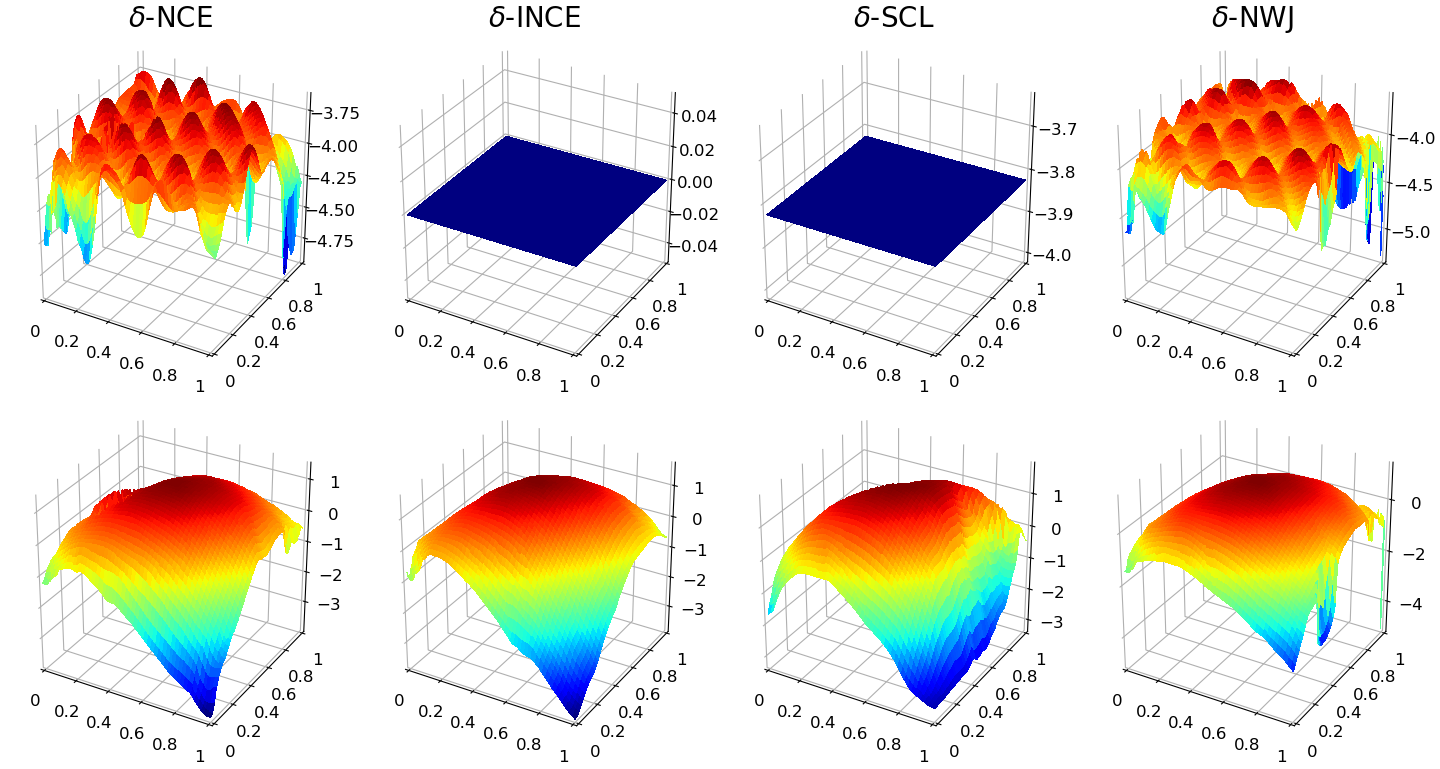}
         \caption{Box (complex)}
     \end{subfigure}
     \hfill
     \begin{subfigure}[b]{0.89\textwidth}
         \centering
         \includegraphics[width=\textwidth]{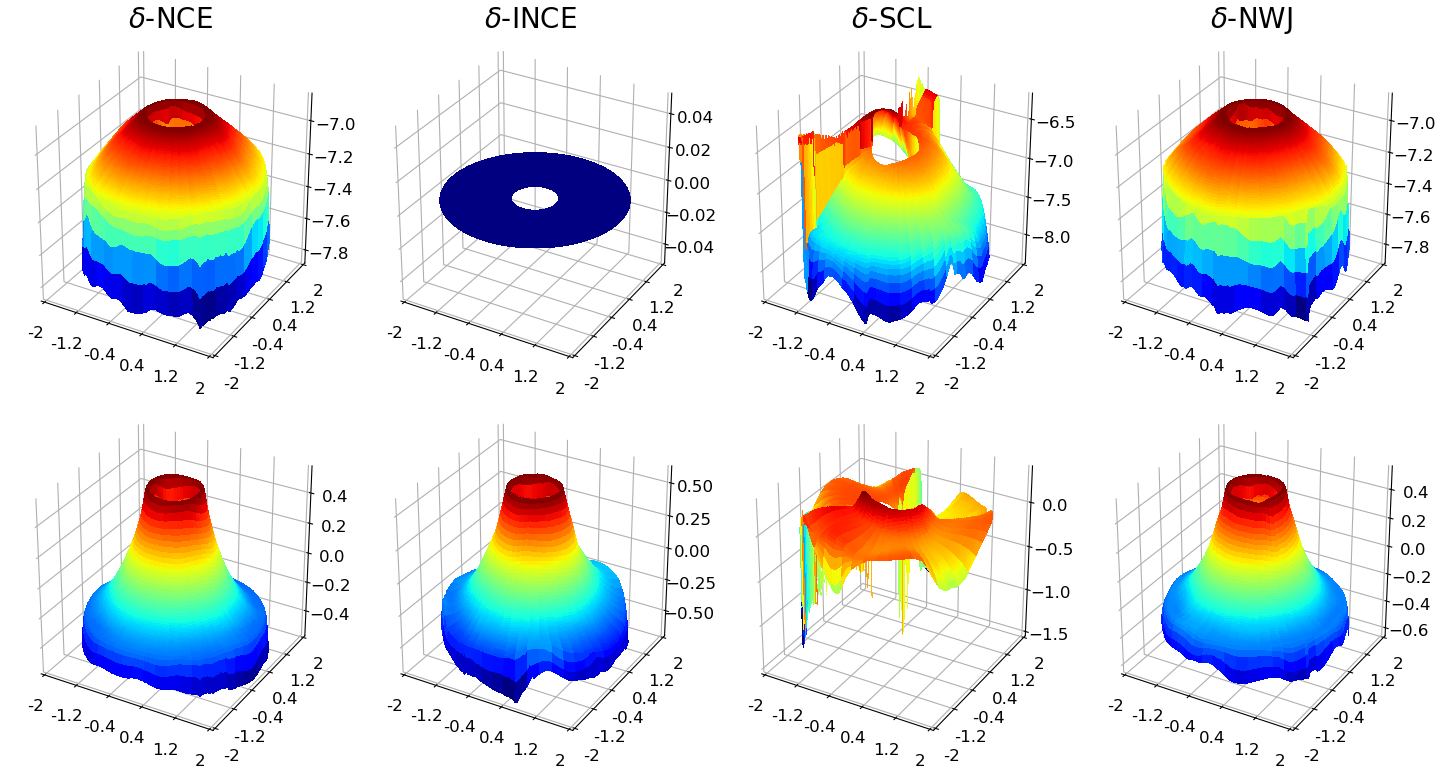}
         \caption{Hollow ball}
     \end{subfigure}
     \hfill
     \begin{subfigure}[b]{0.89\textwidth}
         \centering
         \includegraphics[width=\textwidth]{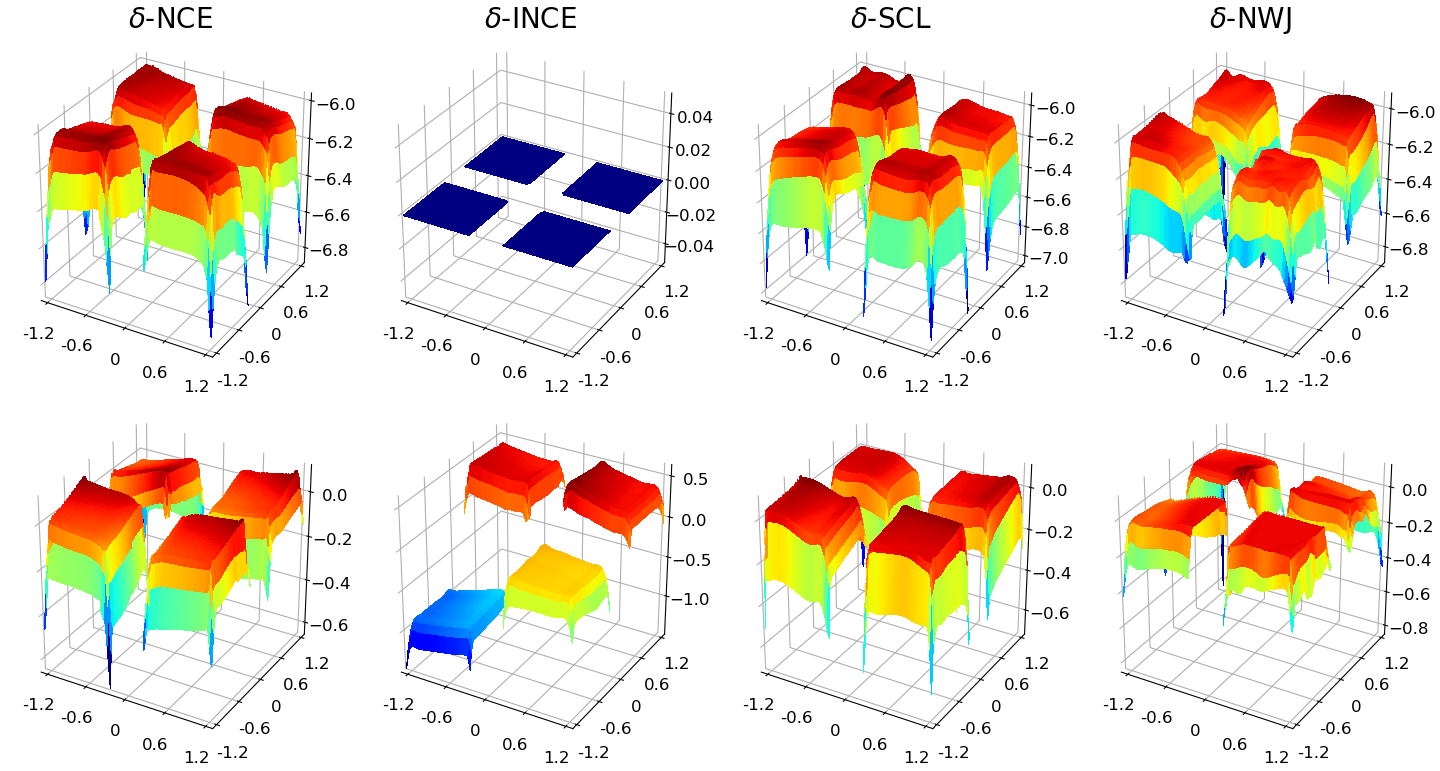}
         \caption{Cube grid}
     \end{subfigure}
        \caption{Learned functions $\alpha \circ \rep$ (top rows) and $\tilde{\alpha} \circ \rep$ (bottom rows) for configurations (a) box (complex), (b) hollow ball and (c) cube grid. In the case of $\cri$-INCE, $\alpha$ is set to zero.}
        \label{fig:additional_alpha}
\end{figure}

\begin{table}
    \caption{Identifiability on synthetic data for the original SCL \citep{scl}. $R^2$ and MCC [\%] mean $\pm$ standard deviation over 2 random seeds}
    \label{tab:synthetic_mcc_r2_scl}
    \centering
    \begin{tabular}{lcllll}
        \toprule
        Scenario   & $\beta$    & \multicolumn{1}{c}{$R^2$} & \multicolumn{1}{c}{MCC} \\
        \midrule
        Box (simple)    & 1/2           & 94.46 $\pm$ 2.94  & 55.52 $\pm$ 4.63   \\
        Box (simple)    & 1             & 89.06 $\pm$ 0.09  & 57.38 $\pm$ 0.89   \\
        Box (simple)    & 3             & 88.80 $\pm$ 0.08  & 57.05 $\pm$ 1.62   \\
        Box (simple)    & 5             & 89.84 $\pm$ 0.71  & 55.51 $\pm$ 2.19   \\
        \midrule
        Box (complex)   & 1             & 91.12 $\pm$ 0.07  & 56.02 $\pm$ 0.83   \\
        Box (complex)   & 3             & 91.02 $\pm$ 0.06  & 51.45 $\pm$ 3.14   \\
        \midrule
        Hollow ball     & 1             & 84.69 $\pm$ 1.81  & 52.97 $\pm$ 2.08   \\
        Hollow ball     & 3             & 22.59 $\pm$ 1.11  & 22.16 $\pm$ 1.24   \\
        Hollow ball     & 5             & 19.97 $\pm$ 1.31  & 23.46 $\pm$ 1.22   \\
        \midrule
        Cube grid       & 1             & 43.88 $\pm$ 4.91  & 32.45 $\pm$ 4.24   \\
        Cube grid       & 5             & 20.09 $\pm$ 1.64  & 23.53 $\pm$ 2.19   \\
        \bottomrule
    \end{tabular}
\end{table}



\end{document}